\def\eqref#1{equation~\ref{#1}}
\def\1{\bm{1}}
\DeclareMathAlphabet{\mathsfit}{\encodingdefault}{\sfdefault}{m}{sl}
\SetMathAlphabet{\mathsfit}{bold}{\encodingdefault}{\sfdefault}{bx}{n}
\newcommand{\E}{\mathbb{E}}
\DeclareMathOperator{\Ehat}{\widehat{\mathbb{E}}}
\newcommand{\T}{{\top}}
\newcommand{\onevec}{\mathbf{1}}
\newcommand{\xtrain}{\mathbf{x}_{\text{train}}}
\newcommand{\xaug}{\mathbf{x}_{\text{aug}}}
\newcommand{\ntrain}{n_\text{train}}
\newcommand{\strain}{s_\text{train}}
\newcommand{\vtrain}{v_\text{train}}
\newcommand{\vtraint}{\tilde{v}_\text{train}}
\newcommand{\vtraintt}{\tilde{v}_\text{train}^\top}
\newcommand{\wt}{{w^\top}}
\newcommand{\bx}{{\mathbf{x}}}
\newcommand{\naug}{n_\text{aug}}
\newcommand{\saug}{s_\text{aug}}
\newcommand{\uaug}{u_\text{aug}}
\newcommand{\uaugt}{\tilde{u}_\text{aug}}
\newcommand{\uaugtt}{\tilde{u}_\text{aug}^\top}
\newcommand{\varx}{\sigma_{x}^2}
\newcommand{\varvarepsilon}{\sigma_{\varepsilon}^2}
\newcommand{\varntrain}{\sigma_{n_\text{train}}^2}
\newcommand{\varnaug}{\sigma_{n_\text{aug}}^2}
\definecolor{antiquefuchsia}{rgb}{0.57, 0.36, 0.51}
\definecolor{awesome}{rgb}{1.0, 0.13, 0.32}
\definecolor{brightmaroon}{rgb}{0.76, 0.13, 0.28}
\definecolor{brickred}{rgb}{0.8, 0.25, 0.33}
\newcommand{\Tianjian}[1]{{\color{black}{#1}}}
\newcommand{\edit}[1]{{\color{black}{#1}}}
\newcommand{\camera}[1]{{\color{black}{#1}}}
\newcommand{\add}[1]{{\color{black}{#1}}}
\newtheorem{proposition}{Proposition}
\newtheorem{theorem}[proposition]{Theorem}
\newtheorem{definition}[proposition]{Definition}
\newtheorem{remark}[proposition]{Remark}
\newtheorem{lemma}[proposition]{Lemma}
\title{Robustness through Data Augmentation Loss Consistency}
\author{\name Tianjian Huang\footnotemark[1]
\email tianjian@usc.edu \\
\addr University of Southern California
\vspace{-0.025in}
\AND
\name Shaunak Halbe$^\ast$\footnotemark[2]\email shalbe9@gatech.edu \\
\addr Georgia Institute of Technology
\vspace{-0.025in}
\AND
\name Chinnadhurai Sankar \email chinnadhurai@meta.com \\
\addr Meta AI
\vspace{-0.025in}
\AND
\name Pooyan Amini \email pamini@meta.com \\
\addr Meta AI
\vspace{-0.025in}
\AND
\name Satwik Kottur \email skottur@meta.com \\
\addr Meta AI
\vspace{-0.025in}
\AND
\name Alborz Geramifard \email alborzg@meta.com \\
\addr Meta AI
\vspace{-0.025in}
\AND
\name  Meisam Razaviyayn$^\ast$\email razaviya@usc.edu \\
\addr University of Southern California
\vspace{-0.025in}
\AND
\name  Ahmad Beirami$^\dag$ \email beirami@google.com \\
\addr Google Research
\vspace{-0.025in}
}
\begin{document}

\footnotetext{$^\ast$The work of TH, SH, and MR was partially supported with funding from the USC-Meta Center for Research and Education in AI and Learning (REAL@USC) and with gift funding from 3M.}
\footnotetext{$^\dag$The work of SH was done at the University of Southern California and the work of AB was done at Meta AI.}

\maketitle
\thispagestyle{fancy}
\vspace{-0.15in}
\begin{abstract}
\vspace{-0.15in}
While deep learning through empirical risk minimization (ERM) has succeeded at achieving human-level performance at a variety of complex tasks, ERM is not robust to distribution shifts or adversarial attacks. Synthetic data augmentation followed by empirical risk minimization (DA-ERM) is a simple and widely used solution to improve robustness in ERM. In addition, consistency regularization can be applied to further improve the robustness of the model by forcing the representation of the original sample and the augmented one to be similar. However, existing consistency regularization methods are not applicable to \textit{covariant data augmentation}, where the label in the augmented sample is dependent on the augmentation function. For example, dialog state covaries with named entity when we augment data with a new named entity. In this paper, we propose data augmented \add{loss} invariant regularization (DAIR), a simple form of consistency regularization that is applied directly at the loss level rather than intermediate features, making it widely applicable to both invariant and covariant data augmentation regardless of network architecture, problem setup, and task. We apply DAIR to real-world learning problems involving covariant data augmentation: robust neural task-oriented dialog state tracking and robust visual question answering. We also apply DAIR to tasks involving invariant data augmentation: robust regression, robust classification against adversarial attacks, and robust ImageNet classification under distribution shift. Our experiments show that DAIR consistently outperforms ERM and DA-ERM with little marginal computational cost and sets new state-of-the-art results in several benchmarks \add{involving covariant data augmentation}. Our code of all experiments are available at: \url{https://github.com/optimization-for-data-driven-science/DAIR}.
\end{abstract}

\section{Introduction}
\label{sec:intro}
Deep neural networks are widely used in various applications ranging from computer vision to language processing.
While deep learning has surpassed human-level performance in numerous tasks, neural networks fail  under small adversarial perturbations of the test samples~\citep{goodfellow2015explaining} or natural shifts of distribution at deployment time~\citep{CMNIST}. 
These issues have motivated the research community to invest in a variety of methods for evaluation and mitigation of {\em robustness} in deep learning. This includes introduction of new robustness benchmarks, e.g., Rotated/Colored MNIST~\citep{CMNIST}, DomainNet~\citep{DomainNet}, ImageNet-R~\citep{hendrycks2021many}, ImageNet-9~\citep{xiao2020noise} for robustness to natural distribution shift; and Fast Gradient Sign Method (FGSM)~\citep{FGSM} and Projected Gradient Descent (PGD)~\citep{madry2018} for adversarial robustness.

Researchers have also proposed numerous algorithmic solutions to improve robustness to distribution shift~\citep{DANN, RMNIST, DRO, MLDG, CORAL, MMD, CDANN, pmlr-v139-krueger21a, transfer, prl} and adversarial attacks~\citep{madry2018,li2020overfitting, zheng2020efficient, zhang_icml_2019, robust_new}.
These approaches are usually more complex than conventional empirical risk minimization (ERM)  and hence they cannot be readily applied to involved tasks with non-trivial model architectures. \Tianjian{%
For example, in generative language modeling imposing a constraint on the intermediate data representations is non-trivial, which is required by CORAL~\citep{CORAL}.} In addition, recently \cite{ERM} demonstrated that vanilla ERM, if tuned properly, remains competitive with (or may even outperform) many such complex methods in real-world scenarios involving robust inference.
\textbf{Data augmentation} can be employed to improve the robustness of ERM by curating synthetic examples that exhibit a desired invariance/covariance. In this paper, {\em invariant data augmentation} refers to the case where the features are perturbed to obtain a synthetic augmented example that preserves the original label. For example, in an image classification task where the aim is to classify the foreground of an image, we can create augmented data points by keeping the foreground and changing the background. Under this data augmentation process, the label of data (i.e., the image class) does not change and hence we call it invariant data augmentation.
On the other hand, {\em covariant data augmentation} refers to the case where perturbation of the features results in the label to covary with the features. For example, in a dialog state tracking task where  data augmentation is performed by varying named entities in the input data, the model  output (label) of the augmented data also needs to covary to contain the new named entity.

Data augmentation techniques abound in the literature: 
\cite{tensmeyer2016improving} and Cutout~\citep{cutout} curate invariant image transformations to improve image representations. Mixup~\citep{zhang2017mixup} and CutMix~\citep{cutmix} curate covariant data augmentations via linear combination of features between different classes to learn more robust representations. \cite{volpi2018generalizing, zhou2020deep} perform data augmentation with adversarial images to improve robustness. Finally, \cite{autoaug, fastaug} introduce a procedure which automatically searches for improved data augmentation policies. %
While simple, data augmentation remains an effective and universal solution to improve model robustness.

{\bf Consistency regularization} can be further applied on top of data augmentation to enhance robustness by enforcing the desired invariances on the model. \cite{engstrom2018evaluating, kannan2018adversarial, zhang_icml_2019,robust_new} utilize consistency regularization at an embedding layer to train robust neural networks against adversarial attacks. Various forms of consistency regularization have been applied to unsupervised learning~\citep{sinha2021consistency}, self-supervised learning~\citep{chen2020self, von2021self}, \add{and semi-supervised learning to exploit unlabeled data~\citep{bachman2014learning,laine2016temporal,miyato2018virtual,sohn2020fixmatch,xie2020self}.} Standard consistency regularization  forces intermediate features to be similar among all inputs variations and hence is only applicable to invariant data augmentation, where data augmentation keeps the label of the augmented sample intact. Such consistency regularization may even hurt performance in the face of covariant data augmentation, where the label for the augmented sample may change. For example, in dialog state tracking where the dialog state covaries with named entities in dialog context, standard consistency regularization hurts overall performance as it forces the model to not depend on the named entity for its prediction. See \Cref{sec:covariance} for a more detailed explanation and \Cref{sec:experiments} for experiments that confirm this.

In this paper, we propose a simple form of consistency regularization, called data augmented \add{loss} invariant regularization (DAIR), that is directly applied at the loss level. 
While existing consistency regularization methods can only be applied to invariant data augmentation, DAIR is applicable to both invariant/covariant data augmentation when a pair of data samples expecting consistent performance. This is vastly in contrast to existing feature consistency regularizers that apply on an intermediate embedding space or the very last layer of the model (logits).
As a result, DAIR only requires marginal computational cost on top of data augmentation, and is broadly applicable to a wide host of learning tasks, including generative models with covariant data augmentation. %
We 
theoretically prove some of the properties of DAIR in \Cref{sec:dair-formulation} and empirically evaluate DAIR on a variety of problems ranging from  defense against adversarial attacks to domain generalization in the presence of environment shift in \Cref{sec:experiments,sec:invariate-exp}. Our experimental results show that DAIR is competitive with state-of-the-art algorithms specifically designed for these problems.  %

\section{\add{DAIR:} Data Augmented \add{Loss} Invariant Regularization}
\label{sec:dair-formulation}

For a data sample $z = (x,y)$, let  $\ell(z; \theta)$ be its parametric loss function, where $\theta$ is the set of model parameters (e.g., network weights).  The popular Empirical Risk Minimization (ERM) framework trains the model by minimizing the expected value of the following loss over the training data:
\begin{equation}
    f_{\text{ERM}}(z; \theta)= \ell(z; \theta).\label{eq:ERM-loss}
    \tag{ERM}
\end{equation}
We assume that we have access to a (potentially randomized) data augmenter function $A(\cdot)$. Examples for $A$ include (random) rotation, change of background, or change of entity names. Such augmenters capture the transformations against which we wish to be invariant.
Given a sample $z,$ let $\widetilde{z} = (\widetilde{x}, \widetilde{y}) = A(z)$ denote an augmented sample. Previous work has used both original and augmented examples during training, which leads to the following standard objective function, called Data Augmented Empirical Risk Minimization (DA-ERM): 
\begin{equation}
    f_{\text{DA-ERM}}(z, \widetilde{z}; \theta)= \frac{1}{2}\ell(z; \theta) + \frac{1}{2} \ell(\widetilde{z}; \theta).\label{eq:DA-ERM-loss}
    \tag{DA-ERM}
\end{equation}
While DA-ERM has been successful in many applications, 
one natural question is whether we can further improve upon it using the knowledge that the performance on augmented samples should be consistent with the original ones. Consistency regularization further penalizes DA-ERM for any such inconsistency at the feature/loss level: $ f_{\text{Consistency},\mathcal{D}, \lambda}(z, \widetilde{z}; \theta)= f_{\text{DA-ERM}}(z, \widetilde{z}; \theta) + \lambda \mathcal{D}(z, \widetilde{z}; \theta),$ where $\mathcal{D}(z, \widetilde{z}; \theta)$ is a proper divergence between the original sample representation and the augmented sample representation, and where the goal of the regularizer applied at some intermediate feature space is to maintain the performance of the model on $z$ and $\tilde{z}$ consistent. In this paper, we focus on a specific type of such regularization, called data augmented \add{loss} invariant regularization (DAIR):\footnote{\color{black}We performed our initial experiments by weighting the original and augmented examples differently, and empirically observed that the weight had no significant effect on the performance. Hence, we fixed equal weight on both original and augmented samples for ease of tuning.}
\begin{align*}
\color{black}
        f_{\text{DAIR},\mathcal{R}, \lambda}(z, \widetilde{z}; \theta) &= f_{\text{DA-ERM}}(z, \widetilde{z}; \theta) + \lambda \mathcal{R}(\ell(z; \theta), \ell(\widetilde{z}; \theta)) \\
        &= \frac{1}{2} \ell(z; \theta) + \frac{1}{2} \ell(\widetilde{z}; \theta) + \lambda \mathcal{R}(\ell(z; \theta), \ell(\widetilde{z}; \theta)),
    \label{eq:DAIR-loss}
    \tag{DAIR}
\end{align*}
where the regularization is directly applied to the loss. \camera{This ``loss-level'' regularization approach is different from existing ``feature-level'' consistency regularization approaches  discussed in \cref{sec:intro}. For example, consider a classification task with a neural network $\mathcal{F}(\cdot,\theta):\mathbb{R}^d \mapsto \mathbb{R}^k$ where $\theta$ denotes the weights of the network. This network takes a $d$-dimensional input data point and outputs a $k$-dimensional logit/softmax output. A popular training approach is to consider $\ell(z;\theta) = C(\mathcal{F}(x,\theta),y)$ where $C(\cdot,\cdot)$ is the cross-entropy loss and $z = (x,y)$. 
Existing regularization mechanisms, such as \cite{engstrom2018evaluating, kannan2018adversarial, zhang_icml_2019,robust_new,sinha2021consistency,von2021self,miyato2018virtual}, to name just a few, operate at the feature level, i.e., they are of the form $\mathcal{R}_{f} (\mathcal{F}(x,\theta),\mathcal{F}(\widetilde{x},\theta)).$ However, DAIR operates at the loss level as demonstrated in \cref{eq:DAIR-loss}.
The idea behind DAIR is to simply promote $\ell(z; \theta) \approx \ell(\widetilde{z};\theta)$, and ignore the features or even the rest of the possible outcomes of $y$ and simply focus on the current sample's loss. 
While DAIR is a relatively weaker form of consistency regularization, it is suitable for problems where feature consistency may not be conceptually meaningful (See \Cref{sec:covariance} for a more detailed discussion).  For instance, in language modeling when a pair of sentences  differ in their corresponding named entities, it is not clear why we should enforce their embeddings to be similar unless the label is desired not to depend on such named entity. However, loss consistency is still meaningful promoting the probability of label given input to be the same on the original and the augmented samples.}

We remark that DAIR requires pairing information between original and augmented samples, which may not always be available (e.g., DomainBed~\citep{ERM}). However, we show that this simple approach is still broadly applicable to various real-world problems regardless of model architecture, and is indeed competitive with state-of-the-art methods for imposing invariance.
As it turns out, we are particularly interested in a particular form of the DAIR regularizer:
\begin{equation}
    \mathcal{R}_\text{sq}(z, \widetilde{z}; \theta) :=  \left(\sqrt{\ell(z; \theta) } - \sqrt{\ell(\widetilde{z}; \theta)}\right)^2,
    \label{eq:reg_sq}
    \tag{DAIR-SQ}
\end{equation}
and we call this variant DAIR-SQ. \Tianjian{Note that $\mathcal{R}_\text{sq}$ has the same scale as the loss function $\ell$, making it easier to tune $\lambda$. Empirically we observe that the optimal $\lambda$ for all the experiments mentioned later in the paper falls in $[0.2, 100]$, across various tasks (from regression to sequence-to-sequence generative modeling).} Further justification on DAIR-SQ will be provided through the rest of this section. %

Finally, in most (real-world) applications performance is  measured  through 0-1 metrics other than the loss function. For example, we are usually concerned with accuracy in image classification while we optimize cross-entropy loss. Let $\mathcal{H}(z; \theta) \in \{0, 1\}$ denote a 0-1 evaluation performance metric of interest, e.g., accuracy.  Given the sample $z$ (or $\widetilde{z})$, the model performance is captured by $\mathcal{H}(z;\theta)$ (or $\mathcal{H}(\widetilde{z} ;\theta)$). For any $z$ such that $\mathcal{H}(z; \theta) = 1$, we define the corresponding consistency metric as:
\begin{equation}
\mbox{CM}(z, \widetilde{z}; {\theta}) = \mathbb{I}\{\mathcal{H}(\widetilde{z}; \theta) = 1\;|\;\mathcal{H}(z; \theta) = 1\}.
\tag{Consistency Metric}
\label{eq:Consistency}
\end{equation}
Notice that similarly to the original performance metric, which is only used for model evaluation, we use the consistency metric at evaluation time only.

\subsection{Why DAIR at the loss level?}
\vspace{-.05in}
\label{sec:covariance}
As we mentioned in \Cref{sec:intro}, consistency regularization has been extensively studied in the literature. However, regularization at loss level has been relatively unexplored. We propose DAIR at the loss level, making it broadly applicable when pairing information is available between original and augmented samples even under the cases where the respective labels are different. 
Consider the following two examples: visual question answering (\Cref{sec:vqa}) and dialog state tracking (\Cref{sec:tod}) in which the labels of the augmented examples covary with the augmented features. In these setups, feature consistency regularization is not conceptually meaningful as the embedding of the image with zebra removed should not be the same as the original image (\Cref{fig:vqa_ex}), or the embedding of the dialog state with named entity changed from airport to bus station should not remain unchanged (\Cref{fig:dst_ex}). In fact, forcing the embeddings to be exactly the same will remove vital information needed for performing the task and will incorrectly force the model to provide the same output for the original and augmented samples. \add{On the other hand, we can enforce the loss value at the augmented sample and the original sample to be the same, which implies $p(\widetilde{y}|\widetilde{x}; \theta) \approx p(y|x; \theta)$ when loss is viewed as a log-likelihood function.}

\begin{figure}[h]
\minipage{0.48\textwidth}
  \includegraphics[width=0.9\linewidth]{../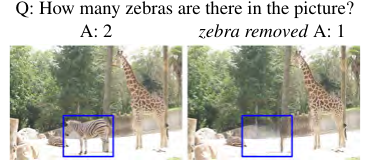}
  \caption{VQA: Answer changes after augmentation. Image taken from \cite{agarwal2020towards}.}
\label{fig:vqa_ex}
\endminipage\hfill
\minipage{0.48\textwidth}
    \vspace{0.1in}
  \includegraphics[width=\linewidth]{../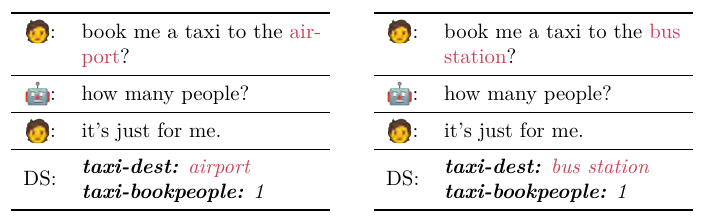}
  \vspace{0.01in}
    \caption{DST: dialog state changes after augmentation.}
    
    \label{fig:dst_ex}
\endminipage
\vspace{-.1in}
\end{figure}

\add{
To contextualize DAIR, consider a classification task using a function approximator (e.g., a deep neural network) followed by a softmax layer. Let $\mathcal{F}(x,\theta)$ be the output of the model right before the softmax layer. Hence, $\mathcal{F}(x,\theta) \propto e^{-\ell(x, \cdot;\theta)}$ for all possible outcomes In addition to two DAIR variants, we consider the regularizer to be any proper divergence between the output distributions $\mathcal{F}(x,\theta)$ and $\mathcal{F}(\widetilde{x},\theta)$, such as KL divergence, which will promote $\mathcal{F}(x,\theta) \approx \mathcal{F}(\widetilde{x},\theta).$
In addition to~\ref{eq:reg_sq}, we define the following regularizers that we use throughout the paper:
\vspace{1in}
\begin{itemize}
  \setlength\itemsep{0.02in}
    \item $\mathcal{R}_{\text{L1}}(z,\widetilde{z}; \theta) :=  |\ell(z; \theta) - \ell(\widetilde{z}; \theta)|$;\footnote{\camera{A similar L1 regularization idea has been explored in \citep{garg2019counterfactual}, however, our proposed DAIR regularizer differs substantially  by imposing consistency at the  loss level.}} \hfill{(DAIR-L1)}
    \item $\mathcal{R}^{\mathbf{\mathcal{F}}}_\text{KL}(z,\widetilde{z}; \theta) :=  \mbox{KL}(\mathcal{F}(x,\theta)\| \mathcal{F}(\widetilde{x},\theta))$. \hfill{(KL Feature Consistency)}
\end{itemize}
Notice that the KL feature consistency regularizer is oblivious to $\widetilde{y},$ and remains the same even for covariant data augmentation where $\widetilde{y} \neq  y.$

}

\add{
{\bf Toy Experiment: Rotated 6 \& 9.} Consider a binary classification problem which is derived from MNIST with only digits 6 and 9. The goal is to explore test-time distribution shifts in the form of rotation. We consider the augmentation function to be $180\degree$ rotations. Since the digits 6 and 9 are similar to each other respectively after such rotation, we  naturally flip  the augmented label, making this a covariant data augmentation. The rotation schemes for training, augmentation and testing are detailed in \Cref{tab:rot-scheme-toy}.

\begin{figure}[t]
    \begin{center}
        \begin{minipage}{.45\textwidth}
        \centering
        \includegraphics[width=2.85in]{../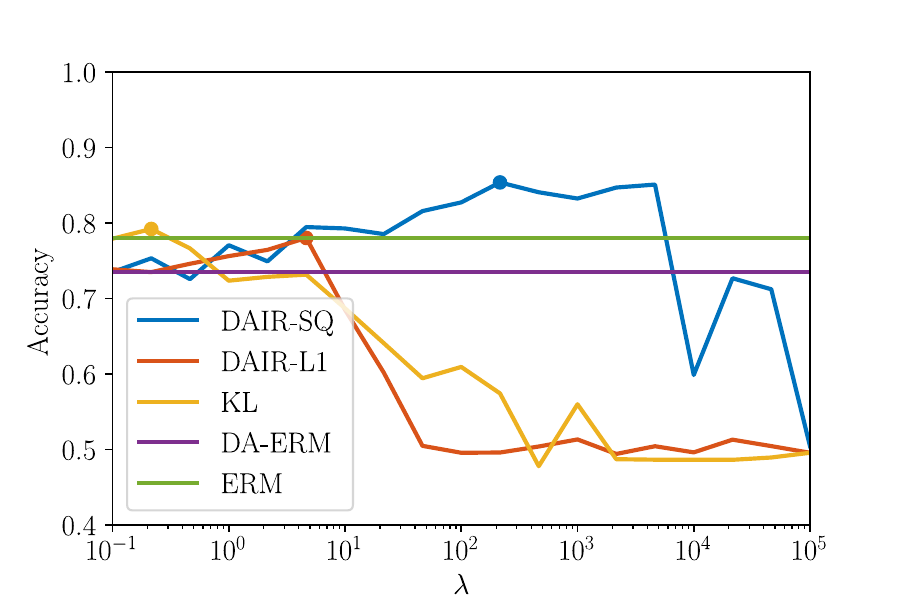}
        \caption{Test accuracy for $\theta=45^\circ$ rotated images as a function of $\lambda$ (strength of the regularizer). Best validation $\lambda$ is marked with solid circles. 
        }
        \label{fig:rot-6-9-lambda}
    \end{minipage}
    \hspace{0.050\textwidth}
    \begin{minipage}{.45\textwidth}
        \centering
        \includegraphics[width=2.85in]{../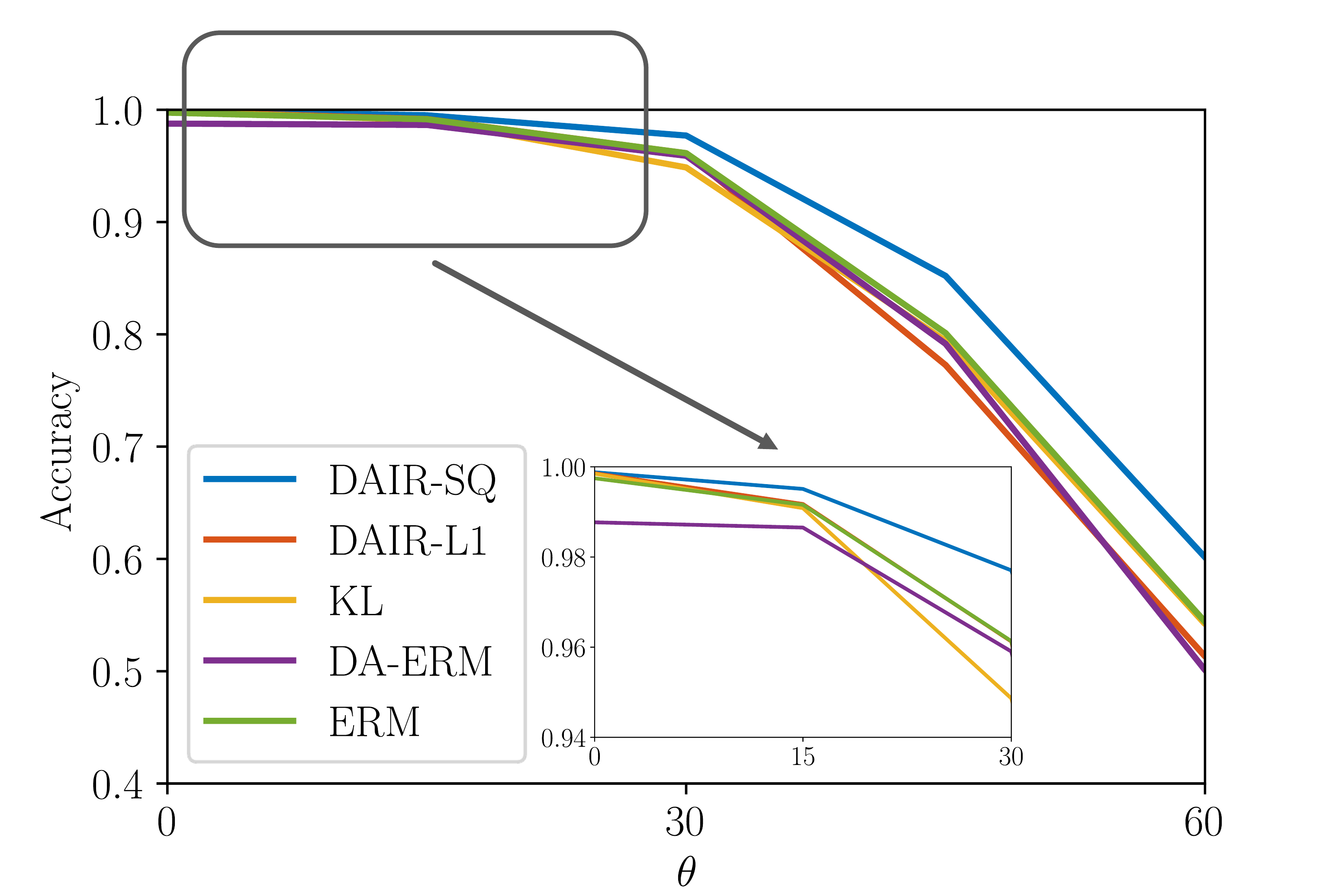}
        \caption{Test accuracy as a function of test-time rotation $\theta\in[0^\circ, 60^\circ]$. We report the performance of each method with their best validation $\lambda$.}
        \label{fig:rot-6-9-theta}
    \end{minipage}
    \end{center}
\end{figure}

In \Cref{fig:rot-6-9-lambda}, we benchmark the performance of KL feature consistency, DAIR-L1, and DAIR-SQ against ERM and DA-ERM, when the test rotation is $\theta=45^\circ$, as a function of regularizer strength. As can be seen, DAIR-SQ (blue) is the only approach that  outperforms the ERM baseline (green). 
This is not a coincidence. 
Our $180^\circ$ rotation augmentation is covariant and it can be seen that vanilla DA-ERM does not help but even hurts the performance, which may be attributed to the fact that the test rotations are novel and unseen. KL feature consistency (yellow)  slightly outperforms DA-ERM for small $\lambda$ but struggles to outperform ERM. 
This is a covariant data augmentation case, and hence feature consistency was expected to fail.
We see the performance of KL degrades as $\lambda$ gets larger.
It is noteworthy that DAIR-L1 (red) starts to degenerate to a random classifier baseline for large values of $\lambda$ and fails to produce a model that outperforms ERM. We will investigate this issue more deeply in the next section.

\begin{table}[t]
\centering
\resizebox{0.52\textwidth}{!}{
\begin{tabular}{@{}cccc@{}}
\toprule
Split & Rotation & Example Input & Example Label \\ \midrule
Train & $0^{\circ}$ & \includegraphics[width=0.16in]{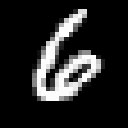} & $6$ \\
Augmentation & $180^{\circ}$ & \includegraphics[width=0.16in]{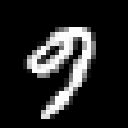} & $9$\\
Test &  $\theta \in [0^{\circ}, 60^{\circ}]$ & \includegraphics[width=0.16in]{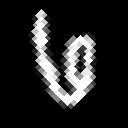} & $6$\\ \bottomrule
\end{tabular}}
\vspace{-.05in}
\caption{\small The rotation scheme used in the Rotated 6 \& 9 experiment.}
\vspace{-.1in}%
\label{tab:rot-scheme-toy}
\end{table}

To further explore how the performance of these baselines are affected by the degree of test-time distribution shift, we consider a variety of test sets with the degree of rotation $\theta$ swept in $[0^\circ, 60^\circ]$. In \Cref{fig:rot-6-9-theta}, we report the performance of each algorithm 
for different values of $\theta$. We see that DA-ERM hurts in-distribution accuracy ($0^\circ$ rotation), and generally hurts test accuracy for various degrees of rotation. For relatively large $\theta$ (representing relatively large distribution shift), only DAIR-SQ consistently outperforms the ERM baseline. Further, since the data augmentation scheme was weak and did not match the test-time distribution, it was expected that vanilla data augmentation is not sufficient to help the model to generalize to the test-time shift. It is noteworthy that perhaps surprising DAIR-SQ outperforms ERM (by a large margin for big distribution shifts) even though DA-ERM is underperforming  ERM.

Through this toy experiment, we motivated the significance of loss-level regularization through DAIR-SQ. To theoretically analyze why/how DAIR-SQ works, we also conduct a simple toy linear regression experiment in~\Cref{sec:dair-better} followed by a multi-dimensional extension in~\Cref{app:toy-extension}, where we provide formal proofs to show that DAIR-SQ is guaranteed to outperform DA-ERM, even in the regime of infinite data or when using weight decay regularization. %

}
 
 \add{
\vspace{-.05in}
\subsection{Practical considerations for DAIR}
Next, we discuss a more detailed comparison of DAIR-SQ with DAIR-L1, and its dependency on $\lambda$, where we explain the rationale for settling on DAIR-SQ for the experiments in \Cref{sec:experiments}. %

\vspace{-.05in}
\subsubsection{Why does DAIR-SQ significantly outperform DAIR-L1?}
\label{sec:justification-dair}
}
While we have already compared DAIR-SQ with several consistency regularization alternatives, we want to specifically focus on a closely related DAIR variant called DAIR-L1, which has already appeared in the literature for invariant data augmentation~\citep{garg2019counterfactual}.
As we observed in \Cref{sec:covariance}, DAIR-L1 either outright failed or was unstable on the toy example.
The following lemma  further investigates the discrepancy between DAIR-SQ and DAIR-L1:

\begin{lemma} For any non-negative loss function $\ell$,
\begin{equation*}
  \mathcal{R}_{\text{L1}}(z, \widetilde{z}; \theta) -\mathcal{R}_{\text{sq}}(z, \widetilde{z}; \theta) \geq 0,
\end{equation*}
with equality iff $\min\{\ell(\widetilde{z}; \theta), \ell(z; \theta), \ell(\widetilde{z}; \theta) - \ell(z; \theta)\}= 0$.
\label{lemma:diff}
\end{lemma}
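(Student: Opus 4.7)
The plan is to reduce the inequality to a one-line algebraic identity after a convenient change of variables. Write $a := \ell(z;\theta) \geq 0$ and $b := \ell(\widetilde z;\theta) \geq 0$, so that the statement becomes the purely scalar inequality
\[
|a - b| \;\geq\; \bigl(\sqrt{a} - \sqrt{b}\bigr)^2,
\]
valid for all $a,b \geq 0$, with an equality condition to be characterized.

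The key step is the substitution $u := \sqrt{a}$, $v := \sqrt{b}$, which turns the left-hand side into a product by the difference-of-squares identity: $|a-b| = |u^2-v^2| = |u-v|\,(u+v)$. The right-hand side is simply $(u-v)^2 = |u-v|^2$. Subtracting, I get
\[
\mathcal{R}_1 - \mathcal{R}_{\text{sq}} \;=\; |u-v|\bigl((u+v) - |u-v|\bigr).
\]
Splitting into the two cases $u \geq v$ and $u \leq v$, the quantity $(u+v) - |u-v|$ simplifies in each case to $2\min(u,v)$, so altogether
\[
\mathcal{R}_1 - \mathcal{R}_{\text{sq}} \;=\; 2\,|u-v|\,\min(u,v) \;\geq\; 0,
\]
which establishes the inequality.

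For the equality clause, the product $2|u-v|\min(u,v)$ vanishes precisely when $|u-v|=0$ or $\min(u,v)=0$, i.e., when $a=b$ or when $\min(a,b)=0$. Translating back, this is equivalent to saying that at least one of $\ell(z;\theta)$, $\ell(\widetilde z;\theta)$, or $\ell(\widetilde z;\theta)-\ell(z;\theta)$ equals $0$ (understood, as needed, up to the usual symmetry in $a$ and $b$), matching the stated characterization.

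There is no real obstacle here; the only step that requires care is ensuring the equality condition is stated symmetrically in $a$ and $b$, since the manipulation naturally produces $\min(\sqrt{a},\sqrt{b})\cdot|\sqrt{a}-\sqrt{b}|$, and one must observe that the vanishing of this expression is exactly the union of the events $a=0$, $b=0$, and $a=b$.
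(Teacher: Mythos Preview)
Your proof is correct and follows essentially the same approach as the paper: both derive the identity $\mathcal{R}_1 - \mathcal{R}_{\text{sq}} = 2\sqrt{\min\{\ell(z;\theta),\ell(\widetilde z;\theta)\}}\,\bigl|\sqrt{\ell(\widetilde z;\theta)}-\sqrt{\ell(z;\theta)}\bigr|$ by a two-case split, with your $u=\sqrt{a},\,v=\sqrt{b}$ substitution and difference-of-squares factorization giving a slightly cleaner route to the same expression. Your treatment of the equality case is in fact more explicit than the paper's, which only states the identity and leaves the equality characterization implicit.
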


\begin{wrapfigure}[13]{r}{2in}
\vspace{-.4in}
  \begin{center}

    \includegraphics[width=1.8in]{../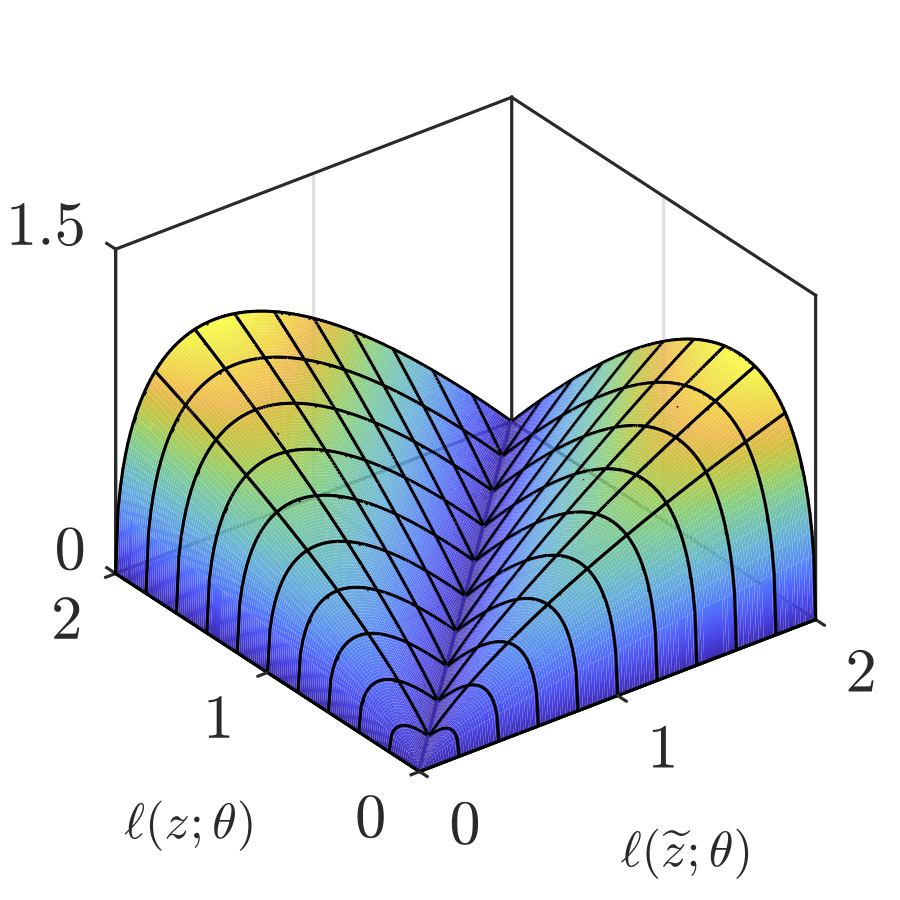}
  \end{center}
   \vspace{-0.1in}
    \caption{\small The plot of \footnotesize $ \mathcal{R}_{\text{L1}}(z, \widetilde{z}; \theta)-\mathcal{R}_\text{sq}(z, \widetilde{z}; \theta).$}
  \label{fig:dair-l1-diff}
  \vspace{-0.20in}
\end{wrapfigure}

The proof of \Cref{lemma:diff} appears in \Cref{app:proof-sq-lq}. The difference is depicted in
\Cref{fig:dair-l1-diff}. 
This suggests that $\mathcal{R}_{\text{sq}}(z, \widetilde{z}; \theta)$ incurs a much smaller penalty when $\ell(z; \theta)$ is large. On the other hand, when $\ell(z; \theta) \approx 0$  the regularizer is much stronger and almost equivalent to $\mathcal{R}_{\text{L1}}$. Why does this matter? 
At the beginning of training when the network is not yet trained, the loss values on the original samples are large, and $\mathcal{R}_{\text{sq}}$ regularizer is weak letting the training to proceed towards a good solution for the original samples. As the network is being trained on original samples and their loss is vanishing, the regulairzer starts to force the network to become invariant on the augmented samples. %
The above hypothesis is also empirically verified on Colored MNIST with Adversarial Augmentation (see~\Cref{app:practial-consideration-exp}).

We also explore the impact of partial augmentation, where we only augment a certain fraction of the training samples. DAIR shows stable performance compared with DA-ERM when the number of augmented examples are limited. The results are presented in \Cref{fig:partial_aug} (\Cref{app:partial-aug}).
\newpage

\add{
\subsubsection{Dependence of DAIR-SQ on the regularization strength $\lambda$}

}
While we have already seen that practically the optimal $\lambda$ lies in the range $[0.2, 100],$ in this section we relate $\lambda$ to the quality of the solution.

\begin{definition}[Empirical Expectation] We use $\Ehat$ to denote the empirical expectation over a set of examples $\mathcal{S}$.
\[
\Ehat_{z} \mathcal{L}(z)=\frac{1}{|\mathcal{S}|}\sum_{i\in\mathcal{S}}\mathcal{L}(z_i).
\]
\end{definition}

\begin{proposition}
\label{prop:dair-reg-bound}
Let $\theta^{\star}_{\lambda} \in \arg\min_{\theta} f_{\text{DAIR},\mathcal{R}_{\text{sq}}, \lambda}(z, \widetilde{z}; \theta)$ and $\widetilde{\theta}$ denote any perfectly invariant solution, i.e., $\mathcal{R}(\ell(z; \widetilde{\theta}), \ell(\widetilde{z}; \widetilde{\theta}))=0$. We have:
\[
\Ehat_z \Bigg\{\mathcal{R}_\text{sq}(\ell(z; \theta^{\star}_{\lambda}), \ell(\widetilde{z}; \theta^{\star}_{\lambda}))\Bigg\} \leq \Ehat_z \left\{\dfrac{\ell(z; \widetilde{\theta}) + \ell(\widetilde{z}; \widetilde{\theta})}{2\lambda}\right\}.
\]
\end{proposition}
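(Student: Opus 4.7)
The plan is to exploit the optimality of $\theta^{\star}_\lambda$ for the DAIR objective and then discard the non-negative loss terms on the left-hand side. First I would evaluate the DAIR objective at the reference point $\widetilde{\theta}$: since by assumption $\mathcal{R}(\ell(z;\widetilde{\theta}),\ell(\widetilde{z};\widetilde{\theta}))=0$ for every training pair $(z,\widetilde{z})$, the regularization term vanishes and we are left with just the DA-ERM portion,
\[
\Ehat_z f_{\text{DAIR},\mathcal{R},\lambda}(z,\widetilde{z};\widetilde{\theta}) \;=\; \Ehat_z\!\left\{\tfrac{1}{2}\ell(z;\widetilde{\theta})+\tfrac{1}{2}\ell(\widetilde{z};\widetilde{\theta})\right\}.
\]

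Next, by the defining property of $\theta^{\star}_\lambda$ as a minimizer of the empirical DAIR objective, the value at $\theta^{\star}_\lambda$ is no larger than the value at $\widetilde{\theta}$, giving
\[
\Ehat_z\!\left\{\tfrac{1}{2}\ell(z;\theta^{\star}_\lambda)+\tfrac{1}{2}\ell(\widetilde{z};\theta^{\star}_\lambda)+\lambda\,\mathcal{R}(\ell(z;\theta^{\star}_\lambda),\ell(\widetilde{z};\theta^{\star}_\lambda))\right\} \;\leq\; \Ehat_z\!\left\{\tfrac{1}{2}\ell(z;\widetilde{\theta})+\tfrac{1}{2}\ell(\widetilde{z};\widetilde{\theta})\right\}.
\]
Finally, using that $\ell$ is non-negative (a standing assumption in the paper), I would drop the two loss terms on the left-hand side and divide by $\lambda>0$ to obtain the claimed inequality.

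Really there is no single hard step here; the argument is just optimality plus non-negativity of the loss. The only subtlety worth stating explicitly is that the inequality $\Ehat_z f_{\text{DAIR}}(z,\widetilde{z};\theta^{\star}_\lambda)\le \Ehat_z f_{\text{DAIR}}(z,\widetilde{z};\widetilde{\theta})$ holds precisely because $\theta^{\star}_\lambda$ minimizes the empirical DAIR objective (i.e.\ $\Ehat_z f_{\text{DAIR}}$), so one should be explicit that the per-sample inequality need not hold but the empirical average version does. A minor caveat is that the statement silently assumes $\ell\ge 0$ and $\lambda>0$; I would flag these assumptions at the start of the proof so that the two manipulations (dropping the loss terms, dividing by $\lambda$) are justified.
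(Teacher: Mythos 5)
Your proposal is correct and follows essentially the same argument as the paper's proof: compare the empirical DAIR objective at $\theta^{\star}_{\lambda}$ and at $\widetilde{\theta}$, use that the regularizer vanishes at $\widetilde{\theta}$, drop the non-negative loss terms at $\theta^{\star}_{\lambda}$, and divide by $\lambda$. The paper's derivation is identical in substance, so no further comparison is needed.
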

\Cref{prop:dair-reg-bound} bounds the value of~\eqref{eq:reg_sq} inversely proportional to $\lambda$. 
Consider the example of a classification task with $K$ classes where the number of samples in different classes are the same. When the weights are zero, i.e., $\widetilde{\theta} = \mathbf{0}$, we have a perfectly invariant solution. Moreover, for this choice, we have $\ell(z; \widetilde{\theta})= \log(K)$ for all $z$ if cross entropy loss is used. The above lemma implies that $\Ehat_z \{\mathcal{R}_\text{sq}(\ell(z; \theta^{\star}_{\lambda}), \ell(\widetilde{z}; \theta^{\star}_{\lambda})) \} \leq \frac{\log K}{\lambda}$. In other words, we can impose invariance by increasing $\lambda$ but we don't need a very large $\lambda,$ reconfirming that $\lambda \leq 100$ sufficed in all of our experiments.

Although we have shown above that $\lambda$ needs not to be very large, a natural question is that could we choose a large $\lambda$ anyway since when $\lambda \to \infty$, the resulting model is perfectly invariant. In the following theorem, we show DAIR-SQ leads to convergent algorithms when optimized by popular methods such as gradient descent and the convergence rate is affected by $\lambda$.
We remark that the proof of \Cref{prop:dair-reg-bound} can be extended to cover the setup of DAIR-L1 as well.

\begin{theorem}
\label{theorem:convergence}
Consider a classification problem with logistic loss, where $\mathbf{x}$, $\widetilde{\mathbf{x}}$ is the input, $y$ is the output and $\theta$ denotes model parameters. Assume $\|\mathbf{x}\|, \|\widetilde{\mathbf{x}}\| \leq \mathcal{D}_x$,  and $\|\theta\| \leq \mathcal{D}_\theta$. After $T$ iterations of gradient descent algorithm (\Cref{alg:dair-alg} in \Cref{app:sq-convergence}), we have 
\begin{equation*}
    \|\nabla_\theta f_{\text{DAIR},\mathcal{R}_\text{sq}, \lambda}(\cdot)\|_2=\mathcal{O}\left(\frac{(1+\lambda\sqrt{\mathcal{D}_x\mathcal{D}_\theta})\mathcal{D}_x^2}{\sqrt{T}}\right).
\end{equation*} %
\end{theorem}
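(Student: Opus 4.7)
My plan is to invoke the standard nonconvex convergence bound for gradient descent on an $L$-smooth, lower-bounded objective: under a suitable step size $\eta=\Theta(1/L)$, after $T$ iterations,
\[
\min_{0\leq t<T}\;\|\nabla_\theta f_{\text{DAIR},\mathcal{R},\lambda}(\theta_t)\|_2^{2} \;\leq\; \frac{2L\bigl(f_{\text{DAIR},\mathcal{R},\lambda}(\theta_0)-f^{\star}\bigr)}{T}.
\]
Non-negativity of $\ell$ and $\mathcal{R}_{\text{sq}}$ gives $f^{\star}\geq 0$, and both are uniformly bounded on the compact domain $\{\|\theta\|\leq\mathcal{D}_\theta\}$, so $f(\theta_0)-f^{\star}$ is controlled by a tame polynomial in $\mathcal{D}_x,\mathcal{D}_\theta,\lambda$. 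Taking a square root yields an $\mathcal{O}(\sqrt{L/T})$ bound, and the theorem reduces to verifying that the product $L\bigl(f(\theta_0)-f^{\star}\bigr)$ grows like $(1+\lambda\sqrt{\mathcal{D}_x\mathcal{D}_\theta})^2\mathcal{D}_x^{4}$.

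Smoothness of the DA-ERM piece is routine: for logistic loss $\ell(z;\theta)=\log(1+\exp(-y\theta^\T\mathbf{x}))$, the identities $\nabla_\theta\ell = -y\mathbf{x}\,\sigma(-y\theta^\T\mathbf{x})$ and $\nabla^2_\theta\ell = \sigma(-y\theta^\T\mathbf{x})\sigma(y\theta^\T\mathbf{x})\,\mathbf{x}\mathbf{x}^\T$, combined with $\sigma(1-\sigma)\leq 1/4$, give $\|\nabla_\theta\ell\|\leq\mathcal{D}_x$ and $\|\nabla^2_\theta\ell\|_{\mathrm{op}}\leq\mathcal{D}_x^{2}/4$. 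The real work is $\mathcal{R}_{\text{sq}}=\bigl(\sqrt{\ell(z;\theta)}-\sqrt{\ell(\tilde z;\theta)}\bigr)^{2}$: the chain rule $\nabla_\theta\sqrt{\ell}=\nabla_\theta\ell/(2\sqrt{\ell})$ introduces $1/\sqrt{\ell}$ factors (and $1/\ell^{3/2}$ in the Hessian) that naively blow up as $\ell\to 0$; a generic bound via $\ell\geq\ell_{\min}$ would make $L$ exponentially large in $\mathcal{D}_x\mathcal{D}_\theta$.

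The way out is a logistic-specific cancellation. The concavity bound $\log(1+u)\geq u/(1+u)$ applied with $u=e^{-y\theta^\T\mathbf{x}}$ gives $\ell(z;\theta)\geq\sigma(-y\theta^\T\mathbf{x})$, which cancels the singular denominator:
\[
\|\nabla_\theta\sqrt{\ell}\| \;=\; \frac{\|\mathbf{x}\|\,\sigma(-y\theta^\T\mathbf{x})}{2\sqrt{\ell(z;\theta)}} \;\leq\; \frac{\mathcal{D}_x\sqrt{\sigma(-y\theta^\T\mathbf{x})}}{2} \;\leq\;\frac{\mathcal{D}_x}{2},
\]
uniformly in $\theta$. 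An analogous accounting for $\nabla^2_\theta\sqrt{\ell}$ yields $\|\nabla^2_\theta\sqrt{\ell}\|_{\mathrm{op}}=\mathcal{O}(\mathcal{D}_x^{2})$. Writing $a=\sqrt{\ell(z;\theta)}$, $b=\sqrt{\ell(\tilde z;\theta)}$ and expanding
\[
\nabla^2_\theta\bigl[(a-b)^2\bigr] \;=\; 2(\nabla a-\nabla b)(\nabla a-\nabla b)^\T + 2(a-b)(\nabla^2 a-\nabla^2 b),
\]
together with $|a-b|\leq\sqrt{\ell_{\max}}=\mathcal{O}(\sqrt{\mathcal{D}_x\mathcal{D}_\theta})$, produces $\|\nabla^2_\theta\mathcal{R}_{\text{sq}}\|_{\mathrm{op}}=\mathcal{O}\bigl((1+\sqrt{\mathcal{D}_x\mathcal{D}_\theta})\mathcal{D}_x^{2}\bigr)$. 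Multiplying by $\lambda$ and combining with the DA-ERM term produces an $L$ of the required order; substituting into the GD inequality and taking a square root then gives the theorem.

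The main obstacle is precisely this cancellation: for a generic non-negative smooth loss, $\sqrt{\ell}$ is \emph{not} globally smooth, and the rate would incur a $1/\sqrt{\ell_{\min}}$ factor exponentially large in $\mathcal{D}_x\mathcal{D}_\theta$. Exploiting the specific asymptotic matching $\log(1+e^{-t})\asymp\sigma(-t)$ as $t\to+\infty$ to absorb the singularity into a polynomial bound is the delicate step; once it is in place, the remaining algebra to extract the exact $(1+\lambda\sqrt{\mathcal{D}_x\mathcal{D}_\theta})\mathcal{D}_x^{2}/\sqrt{T}$ prefactor is routine bookkeeping.
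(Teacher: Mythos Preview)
Your proposal is correct and follows essentially the same route as the paper: bound the Hessian of $f_{\text{DAIR}}$ by exploiting the logistic-specific cancellation $\log(1+u)\geq u/(1+u)$ to tame the $1/\sqrt{\ell}$ singularity, then invoke the standard $L$-smooth nonconvex gradient-descent rate. The only difference is organizational: the paper parameterizes by $\bm{\zeta}=(-y\theta^\top\mathbf{x},\,-y\theta^\top\widetilde{\mathbf{x}})$ and brute-force bounds each entry of the $2\times 2$ Hessian $\nabla^2_{\bm{\zeta}}g$ before sandwiching by the Jacobian, whereas your product-rule decomposition $\nabla^2_\theta[(a-b)^2]=2(\nabla a-\nabla b)(\nabla a-\nabla b)^\top+2(a-b)(\nabla^2 a-\nabla^2 b)$ is cleaner and makes transparent that the $\sqrt{\mathcal{D}_x\mathcal{D}_\theta}$ factor enters solely through $|a-b|\leq\sqrt{\ell_{\max}}$.
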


\Cref{theorem:convergence} shows that DAIR-SQ penalizes the convergence rate for solving the problem to $\epsilon$-gradient accuracy by a   $\lambda\sqrt{\mathcal{D}_x\mathcal{D}_\theta}$ factor as $\lambda$ increases. The penalty grows linearly with $\lambda$, which is consistent with the observation in \Cref{app:toy-logistic}. However, we observe that the additional complexity is negligible in practice as we usually do not solve the optimization problems to stationarity but rather stop after certain number of iterations. For all experiments, %
we solve each task for a certain number of epochs, which is chosen the same as the ERM baseline.
While \Cref{theorem:convergence} is established for the gradient descent, it can be extended to the stochastic settings~\citep[Theorem 2]{lei2019stochastic} based on the smoothness of the DAIR-SQ loss established in the proof of \Cref{theorem:convergence}.

\add{
\section{Experiments on covariant tasks}

\label{sec:experiments}

Thus far, we observed that DAIR-SQ is a practically stable variant of DAIR with some theoretical motivations and guarantees (such as convergence). In the rest of the paper, when we refer to DAIR without only postfix, we mean DAIR-SQ. As we empirically evaluate the performance of DAIR, we emphasize that the only hyperparameter that we tune for DAIR is $\lambda$ (chosen via grid search on validation set). The rest of the hyperparameters, such as step-size, batch-size, and number of training epochs, are only tuned for the ERM baseline and chosen to be exactly the same for DAIR. 
In this section, we continue with covariant tasks where feature-level regularization is expected to hurt the performance. 
}

\subsection{Neural task-oriented dialog modeling}
\label{sec:tod}

Virtual digital assistants that engage in conversations with human users are rapidly gaining popularity. These devices require the modeling of task-oriented dialog systems that can communicate with users through natural language to accomplish a wide range of tasks. \Tianjian{One of the main objectives in task-oriented dialog systems is the Dialog State Tracking (DST), which refers to keeping track of the user goals as the conversation progresses.} Among task-oriented dialog datasets, MultiWOZ \citep{budzianowski2018multiwoz} has gained the most popularity owing to the availability of 10k+ realistic dialogs across 8 different domains, and has been improved several times \citep{wu2019transferable, MW2-1, zang2020multiwoz, han2021multiwoz,2105.14150}. 

Recently, SimpleTOD~\citep{hosseiniasl2020simple} achieved state-of-the-art results on MultiWOZ using a neural end-to-end modeling approach. However, \cite{2105.14150} observed that the performance of SimpleTOD drops significantly when the test set named entities (which are places in the UK) are replaced with new ones never observed during training (with new entities all based in the US), perhaps due to the memorization of named entities during training. 
We leverage DAIR to promote invariance of the dialog policy to named entities in the dialog flow. More importantly, we show that standard consistency regularization on feature space simply does not work. Here, the data augmentation scheme is a simple one. We replace named entities in the training set with their randomly scrambled version. For example, ``cambridge'' could be turned into ``bmcedrgia.''
Details on training data, augmentation schemes and hyper-parameters can be found in \Cref{app:mw-examples}.

\begin{table}[h]
\centering
\resizebox{0.85\linewidth}{!}{%
\begin{tabular}{@{}lccc@{}}
\toprule
\multicolumn{1}{c}{\multirow{2}{*}{}} & \multirow{2}{*}{MultiWOZ 2.2 Test JGA} & MultiWOZ 2.2 Test JGA & \multirow{2}{*}{CM} \\
\multicolumn{1}{c}{} &  & w/ SGD entities &  \\ \midrule
SimpleTOD~\citep{hosseiniasl2020simple} & 0.5483 & 0.4844 & 0.8206 \\
SimpleTOD + DA & 0.5915 & 0.5311 & 0.8354 \\
SimpleTOD + KL feature consistency & 0.5124  & 0.4053  & 0.8298 \\
SimpleTOD + DAIR& {\bf 0.5998} & {\bf 0.5609} & {\bf 0.8902} \\ \bottomrule
\end{tabular}%
}
\caption{\small DAIR achieves state-of-the-art Joint Goal Accuracy (JGA) on both the original MultiWOZ 2.2 test set~\citep{zang2020multiwoz} and well as the MultiWOZ 2.2 test set w/ named entities replaced with SGD~\citep{2105.14150}. %
}
\label{tab:tod-dair-paper}
\vspace{-.1in}
\end{table}

The results are presented in \Cref{tab:tod-dair-paper}, where performance is measured in Joint Goal Accuracy (JGA). \Tianjian{JGA is a binary metric, and is equal to 1 if the predictions of all dialog states in a turn are correct. As such it is a difficult metric to get right too. As can be seen, both DA-ERM and DAIR outperform SimpleTOD~\citep{hosseiniasl2020simple} on MultiWOZ 2.2 w/ SGD entities~\citep{2105.14150}.}
More surprisingly, DAIR also outperforms SimpleTOD on the original MultiWOZ 2.2 test set with no distribution shift, which we attribute to better robustness to the named entity memorization problem observed by~\cite{2105.14150, cho2022know}. We also observe that DAIR significantly improves the JGA consistency metric compared to the DA-ERM baseline. It worth nothing but the simple regularizer provides non-trivial performance increase which takes researchers several years of explore.  Finally, we show that standard consistency regularization (KL) is not well suited to this task and results in performance degradation (see \Cref{sec:covariance} for more explanation on why). 
\subsection{Covariant Visual Question Answering}
\label{sec:vqa}
Visual Question Answering (VQA) has diverse applications ranging from visual chatbots to assistants for the visually impaired. In such real-world settings, it is desirable for VQA models to be robust to variations in the input modalities. In this spirit, recent works \citep{agarwal2020towards,meet2019cycle, Ray2019SunnyAD} have studied the robustness and consistency of VQA models under linguistic and visual variations. In this paper, we focus on the Invariant and Covariant VQA (IV/CV-VQA) dataset which contains semantically edited images corresponding to a subset from VQA v2 \citep{balanced_vqa_v2}. CV-VQA contains images constructed by removing an object which is relevant to answering the question and leads to a different answer. A robust model should make predictions that are consistent to such edits.

We choose the attention based SAAA \citep{Kazemi2017ShowAA} model to match the original setup from \cite{agarwal2020towards}.
 Using DAIR, we enforce consistency in predictions between the original and edited samples. Wherever the edited image is not available, the DAIR formulation reduces to ERM. We use the standard VQA accuracy along with our consistency metric to compare our results against the ERM and DA-ERM setups discussed in \cite{agarwal2020towards}. In addition we compare against a KL feature regularizer to highlight the strengths of our approach.

\begin{table}[h]
\centering
\resizebox{0.46\linewidth}{!}{%
\begin{tabular}{@{}lcc@{}}
\toprule
Algorithm & CV-VQA test $\uparrow$ & CM $\uparrow$ \\ \midrule
\begin{tabular}[c]{@{}l@{}}ERM  \citep{Kazemi2017ShowAA}\end{tabular} & 45.89 & 0.5792 \\
\begin{tabular}[c]{@{}l@{}}DA-ERM  \citep{agarwal2020towards}\end{tabular} & 48.32 & 0.5631 \\
KL Feature Consistency & 48.20 & 0.3479 \\
DAIR  & {\bf 49.75} & {\bf 0.7161} \\ \bottomrule
\end{tabular}
}
  \vspace{-.05in}
\caption{\small Accuracy and Consistency metrics on CV-VQA test set} 
\label{table:cvvqa_0.1_0.1}
\end{table}
The results for the CV-VQA are in~\Cref{table:cvvqa_0.1_0.1}. 
DAIR achieves a higher accuracy as compared to all baselines. This improvement is significant given that the model needs to predict the answer correctly from 3000 candidate answers. As against this, applying KL for feature consistency catastrophically fails on the CV-VQA task achieving significantly lower CM scores than ERM. We describe the Invariant VQA experiment is Section \ref{sec:ivvqa}.
%
%

%

\add{
\section{Experiments on invariant tasks}
\label{sec:invariate-exp}

}
Now that we have established the effectiveness of DAIR on covariant tasks, we also benchmark its performance on invariant tasks where more baselines are available. As in the previouss section, we only tune for $\lambda$ via grid search and all other hyperparameters remain the same as the ERM baseline.

\subsection{Colored MNIST} 

\label{sec:CMNIST}

\begin{wrapfigure}[11]{r}{3.2in}
  \begin{center}
     \vspace{-.25in}
    \includegraphics[width=3.0in]{../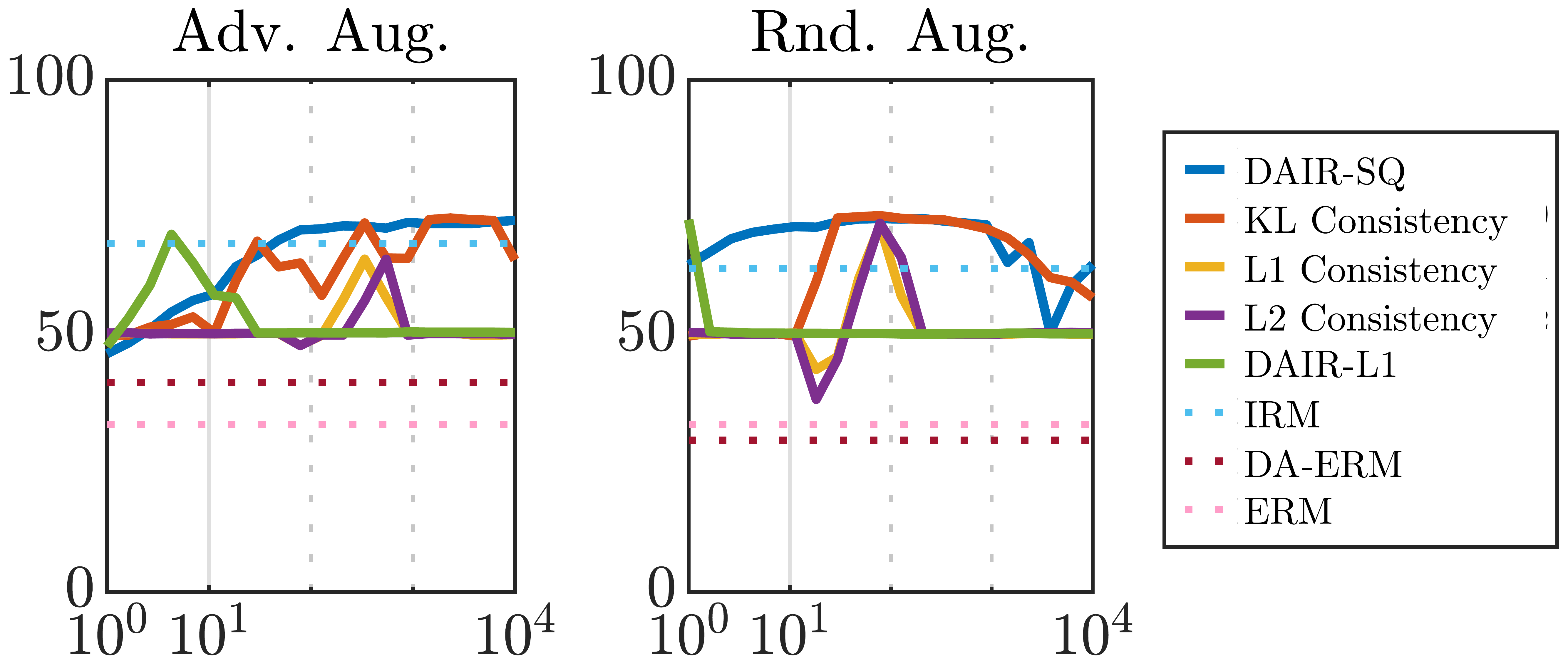}
    \end{center}
    \caption{\small Test accuracy vs $\lambda$ on Colored MNIST for Adversarial Color aug. and Random Color aug.}
    \label{fig:cminst}
\end{wrapfigure}

Colored MNIST \citep{CMNIST} is a binary classification task built on the MNSIT dataset. Digits 0-4 are labeled 1; whereas digits 5-9 are labeled 0. Additionally, 25\% label noise is added, i.e., the labels are flipped with probability 0.25, both at train and test time, capping the achievable test accuracy to 75\%. In this dataset, each digit is RGB colored. During training, label 1 is given the color green with probability 0.9 and red with probability 0.1. On the other hand, label 0 is given red color with probability 0.9 and green with probability 0.1. This introduces a high degree of spurious correlation between color and the label. Thus, ERM is expected to significantly overfit to color for predicting the label.

At test time, the correlation with color is reversed for digits. Hence, vanilla ERM is expected to perform worse than 50\% coin flip at test time. We explore two data augmentation schemes in this experiment. For the \emph{Adversarial Augmentation (Adv. Aug.)} setup, the augmented images will have their color flipped (from red to green or vice versa) with probability 0.1. For the \emph{Random Augmentation (Rnd. Aug.)} setup, the augmented images are colored uniformly at random. \Tianjian{Detailed description of the setup and additional experiments can be found in \Cref{app:mnist-setup} and \Cref{app:additional-mnist}, respectively.} 
\Cref{fig:cminst} suggests that DAIR-SQ and KL consistency regularization achieve $\sim \hspace{-.05in}72\%$ test accuracy using both augmentation schemes, outperforming the state-of-the-art $68\%$ test accuracy reported by IRM~\citep{CMNIST}, and almost reaching the $75\%$ cap. We also compare to more recent baselines in \Cref{Tab:CMNIST2-full-app} (\Cref{app:cm-add}). We note however that this comparison may be unfair because IRM does not have access to any pairing information between the original and the augmented samples. As we observe in the next section, such information is readily available in several real-world benchmarks and DAIR-SQ can exploit it to achieve new state-of-the-art results.
We also notice that neither variant of DA-ERM achieves test performance better than 50\% coin flip in this experiment, while Adversarial Augmentation seems to fare better than Random Augmentation.

\subsection{Rotated MNIST} 
\label{sec:RMNIST}

\begin{figure}[t]
    \centering
    \includegraphics[width = 0.9 \textwidth]{../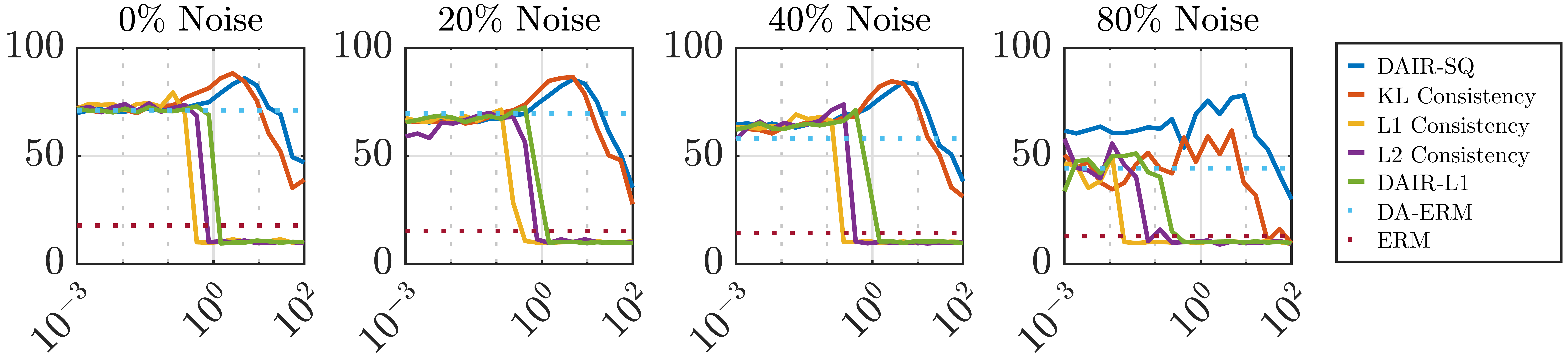}
    \caption{\small Test accuracy as a function of $\lambda$ for different {\color{black}label} noise levels for {\bf Weak Rotation} augmentation.}
    \label{fig:r_minist_noisy}
\end{figure}

\begin{figure}[]
    \centering

    \includegraphics[width = 0.9 \textwidth]{../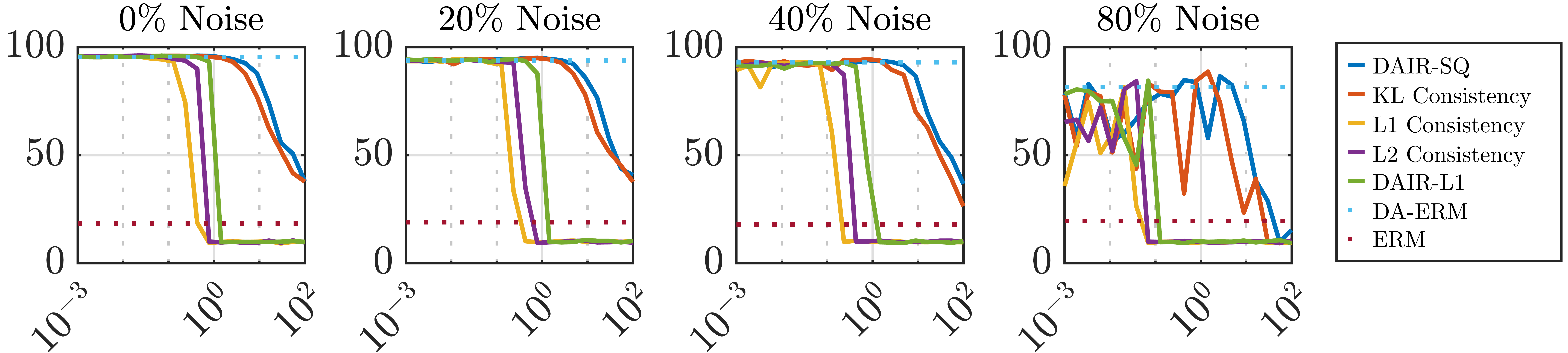}

    \caption{\small Test accuracy as a function of $\lambda$ for different {\color{black}label} noise levels for {\bf Strong Rotation} augmentation.} 
    \label{fig:r_minist_noisy_ez}
    \vspace{-.1in}
\end{figure}

Rotated MNIST \citep{RMNIST} is a dataset where MNIST digits are rotated. We work with two different sets of degrees of rotation for Rotated MNIST. The first one is {\em Weak Rotation} where the digits are rotated uniformly at random $[0, \frac{\pi}{6})$ radians. In {\em Strong Rotation} the digits are rotated uniformly at random $[0, 2\pi)$ radians.  To evaluate the robustness of the methods, we further add label noise at training time where  the label is replaced with a digit  chosen from \{0,\ldots,9\} uniformly at random with a certain probability.  No label noise is added at test time. Detailed setup is in~\Cref{app:mnist-setup}. Model architecture and training parameters are detailed~\Cref{app:mnist-setup}.

In the first experiment, we use Weak Rotation for data augmentation while at test time we use Strong Rotation. Thus, some test time rotations have not been observed at training time.
\Cref{fig:r_minist_noisy} shows the test performance of all algorithms (averaged over three runs) as a function of $\lambda$. 
As can be seen, ERM (with no data augmentation) does not generalize to rotated test images and performs poorly. DA-ERM offers significant performance improvement over ERM.
When $\lambda$ is very small all variants of consistency regularization are virtually the same as DA-ERM.
DAIR-SQ and KL regularizer outperform other regularizers and are the only two variants that offer improvement over DA-ERM as $\lambda$ increases. As the label noise level becomes larger, DAIR-SQ is more robust than KL regularizer and offers the best performance.

Besides DAIR-SQ and KL regularizer, it is noteworthy that the other consistency regularization variants did not offer improvement over DA-ERM and they converged to poor local minima with $10\%$ test accuracy (random) for large $\lambda$. We were not able to remedy this by tuning of their step size.  \Tianjian{See \Cref{sec:justification-dair} for further justification of this phenomenon.}
We also observe that the performance of both DAIR-SQ and KL regularizer achieves a sweet spot for some finite $\lambda$, i.e., the performance starts to drop for large values of~$\lambda$. This is not theoretically expected and can be attributed to the practical issues with solving the consistency regularization problem. \Tianjian{We further investigate this phenomenon in \Cref{sec:characteristic-bump} and provide some explanations.}

The setup for the second experiment is the same as the first one, except we also use Strong Rotation in training for augmentation, so there is no test-time distribution shift for DA-ERM. As can be seen in \Cref{fig:r_minist_noisy_ez}, data augmentation achieves very good performance in this case and none of the DAIR regularizers offer any improvement beyond data augmentation. We suspect this to be true in general; if the data augmentation is well-devised and optimized the resulting model could become invariant to the desired transformations at test time.
\Tianjian{Additional experiments on consistency metric can be found in \Cref{app:additional-mnist}.}

\subsection{Invariant Visual Question Answering}
\label{sec:ivvqa}
In this experiment, we focus on the IV-VQA dataset which contains one or more edited images constructed from orignal images by removing an object which is irrelevant to answering the question. A robust model should make consistently predict the same answer for  the original and corresponding edited images.

Similar to the setup in CV-VQA, We use the SAAA \citep{Kazemi2017ShowAA} model and compare performance against baselines using accuracy and consistency metrics.

\begin{table}[h]
\centering
\resizebox{0.95\linewidth}{!}{%
\begin{tabular}{@{}lcccccc@{}}
\toprule
Algorithm & VQA v2 val $\uparrow$ & CM  $\uparrow$ & Predictions Flipped $\downarrow$ & pos $\rightarrow$ neg $\downarrow$ & neg $\rightarrow$ pos $\downarrow$ & neg $\rightarrow$ neg $\downarrow$ \\ \midrule
\begin{tabular}[c]{@{}l@{}}ERM  \citep{Kazemi2017ShowAA}\end{tabular} & 64.18 & 0.9456 & 8.64 & 3.45 & 3.00 & 2.2 \\
\begin{tabular}[c]{@{}l@{}}DA-ERM  \citep{agarwal2020towards}\end{tabular} & 64.66 & 0.9543 & 7.47 & 2.92 & 2.73 & 2.73 \\
KL Feature Consistency & 64.57 & 0.9582 & 7.07 & 2.73 & 2.50 & 1.84 \\
DAIR  & {\bf 64.75} & \bf{0.9606} & {\bf 6.33} & {\bf 2.54} & {\bf 2.22} & {\bf 1.57} \\ \bottomrule
\end{tabular}
}
\caption{\small Accuracy and Consistency metrics on VQA v2 val \& IV-VQA test set. }

\label{table:flipping_0.1_0.1}
\end{table}

The results are reported in~\Cref{table:flipping_0.1_0.1}. In addition to our CM score, we borrow the consistency metrics from \citep{agarwal2020towards} that measure three types of flips namely, pos $\rightarrow$ neg, neg $\rightarrow$ pos and neg $\rightarrow$ neg. A pos $\rightarrow$ neg flip indicates that the answer predicted with the original image was correct but was wrong with the corresponding edited image. A neg $\rightarrow$ neg flip indicates that the answer changes from original to edited image but is wrong for both.  
DAIR achieves a higher accuracy as compared to all baselines across both datasets, while improving under CM score and the `Predictions flipped' metric which is the sum of the three types of flips. While applying DAIR to this task, we observe a trade-off between the VQA accuracy and the consistency metrics controlled by the $\lambda$ parameter. By increasing $\lambda$, the consistency score rises to as high as 98\%, albeit sacrificing the VQA accuracy by 6-7\%. For moderate values of $\lambda$, DAIR maintains classification accuracy of the model while enforcing consistency across variations in the visual space. See Appendix \ref{app:more-vqa} for more details.

\subsection{Training robust deep networks against adversarial attacks}
\label{sec:robust}
\begin{wrapfigure}[11]{r}{2.25in}
\centering
\vspace{-.15in}
\includegraphics[width=2.25in]{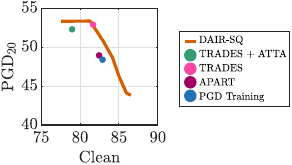}
    \caption{\small PGD20/Clean Acc. trade-off by sweeping $\lambda$.}
    \label{fig:adv-tuning}
\end{wrapfigure}
Neural networks have been widely used in various applications, especially in computer vision. However, neural networks are vulnerable to adversarial attacks, such as Fast
Gradient Sign Method (FGSM)~\citep{FGSM} and Projected Gradient Descent (PGD)~\citep{madry2018}, where small adversarial perturbations in the input are designed to significantly alter the
output prediction.

In this section, we consider training robust neural networks against adversarial attacks and compare with state-of-the-art baseline models which are specifically designed for this task. We evaluate the performance of each algorithm against PGD attacks as well as the clean (no attack-free) accuracy.
In our approach, the augmented examples $\widetilde{z}$ can be generated by a certain strong attack, such as Projected Gradient Descent (PGD) or CW \citep{carlini2017evaluating}.%
We conduct our experiments on CIFAR-10 dataset and compare our approach with several other state-of-the-art baselines. 
 The performance of our algorithm against the Fast Gradient Sign Method (FGSM) and variants of PGD, is summarized in \Cref{tab:robust_nn_results}, which shows that our results are competitive with the baselines.  We report the performance of DAIR in \Cref{tab:robust_nn_results} based on the configurations that give the best Clean accuracy followed by the best Robust accuracy against PGD20 in parenthesis afterward. The trade-off curve shown in \Cref{fig:adv-tuning} suggests that by sweeping the value of $\lambda$, DAIR can achieve a better clean accuracy but a slightly lower PGD20 accuracy, and dominates most of the baseline, while it achieves a similar performance with TRADES. %
Note that the formulation in TRADES is equivalent to consistency regularization with KL divergence  between the logits of the original and adversarial images. As opposed to our setup, the regularizer term in TRADES is also used in solving the maximization problem to generate adversarial images, whereas we only use the original loss for generating the adversarial examples. %

We also report the accuracy consistency metric (CM) in this experiment in \Cref{tab:robust_nn_results}. CM captures the consistency of accuracy on PGD20 attack compared to clean examples. We observe that DAIR outperforms all baselines,  which is in line with its best generalization to different attacks.

Lastly, we compare DAIR with more recent baselines (namely~\cite{robust_new}) on a different robust classification against adversarial attacks setup, where we observe that DAIR offers competitive performance improvements compared to the state-of-the-art baselines (see \Cref{app:NN-details}).

\begin{table}[t]
\centering
\resizebox{0.74\linewidth}{!}{%
\begin{tabular}{@{}clcccc@{}}
\toprule
\# & Algorithm    & \multicolumn{1}{c}{Clean (\%)} & \multicolumn{1}{c}{FGSM (\%) }    & \multicolumn{1}{c}{PGD20 (\%) }  &
\multicolumn{1}{c}{CM (\%) }\\ \midrule
1 & PGD Training \citep{madry2018} & 82.89                     & 55.38                     & 48.40  & --                   \\
2 & APART \citep{li2020overfitting}       & 82.45            & 55.33                     & 48.95 & 60.05               \\
3 & DAIR $(\lambda=6)$  & 83.04                     & 57.57            & 50.68  & $62.66$ \\
4 & TRADES + ATTA \citep{zheng2020efficient}     & 78.98 & 55.58 & 52.30  & $60.56$ \\
5 & TRADES \citep{zhang_icml_2019}      & 81.67 & 57.78 & 52.90 & $63.14$\\
6 & DAIR $(\lambda=16.7)$  & 81.29                     & 58.58            & 53.37 & $67.51$  \\ 
\bottomrule
\end{tabular}
}
\caption{\small CIFAR-10 test accuracies under no attack (clean), FGSM, and PGD20 attacks, and accuracy consistency metric between original and PGD20 attack.}
\vspace{-.1in}
\label{tab:robust_nn_results}
\end{table}

\subsection{Robust regression: simultaneous domain shift and label noise} 
In this experiment, we consider a regression task to
minimize the root mean square error (RMSE) of the predicted values on samples from the Drug Discovery dataset. The task is to predict the bioactivities given a set of chemical compounds (binary features). We follow the setup of \cite{li2020tilted} to introduce random noise to corrupt the targets. Furthermore,
similar to Colored MNIST, we add a spurious binary feature to the original setup. At training time, the spurious feature is set to $1$ if the target is above a threshold (the median of  all the targets in the training samples), and $0$ otherwise. At test time, this condition is reversed leading to poor generalization. We compare using ERM, DA-ERM and DAIR formulations under 0\%, 20\% and 40\% noise levels on three baselines: $\mathcal{L}_2$ loss, Huber loss, and negatively tilted loss~\citep{li2020tilted}, which is called tilted empirical risk minimization (TERM) and is designed for robust regression. For each of these baselines, we perform data augmentation by randomly assigning the spurious feature as $0$ or $1$ with equal probability. Finally, we apply the DAIR regularizer to each of these loss functions with $\lambda=10$.

\begin{table}[h]
\centering
\resizebox{\linewidth}{!}{%
\begin{tabular}{@{}ccccccccccc@{}}
\toprule
\multirow{2}{*}{Algorithms} & \multicolumn{10}{c}{Test RMSE (Drug Discovery dataset)} \\
\cmidrule{2-11}
 & \multicolumn{3}{c}{0\% Noise} & \multicolumn{3}{c}{20\% Noise} & \multicolumn{3}{c}{40\% Noise} & Clean \\ \midrule
\multicolumn{1}{l}{} & ERM & DA-ERM & \multicolumn{1}{c|}{DAIR} & ERM & DA-ERM & \multicolumn{1}{c|}{DAIR} & ERM & DA-ERM & \multicolumn{1}{c|}{DAIR} & ERM \\
$\mathcal{L}_2$ loss & 1.97 \begin{tiny}(0.00)\end{tiny}  & 1.36 \begin{tiny}(0.00)\end{tiny} & \multicolumn{1}{c|}{\textbf{1.23} \begin{tiny}(0.00)\end{tiny}}  & 4.33 \begin{tiny}(0.04)\end{tiny} & 2.52 \begin{tiny}(0.05)\end{tiny} & \multicolumn{1}{c|}{2.04 \begin{tiny}(0.06)\end{tiny}} & 5.30 \begin{tiny}(0.04)\end{tiny} & 3.47 \begin{tiny}(0.07)\end{tiny} & \multicolumn{1}{c|}{2.99 \begin{tiny}(0.09)\end{tiny}} & 1.23 \begin{tiny}(0.00)\end{tiny} \\
Huber \citep{Huber1964RobustEO} & 1.84 \begin{tiny}(0.00)\end{tiny} & 1.27 \begin{tiny}(0.00)\end{tiny} & \multicolumn{1}{c|}{1.24 \begin{tiny}(0.00)\end{tiny}}  & 2.93 \begin{tiny}(0.05)\end{tiny} & 1.50 \begin{tiny}(0.02)\end{tiny}& \multicolumn{1}{c|}{1.39 \begin{tiny}(0.02)\end{tiny}} & 4.40 \begin{tiny}(0.07)\end{tiny}& 2.18 \begin{tiny}(0.04)\end{tiny}& \multicolumn{1}{c|}{1.70 \begin{tiny}(0.05)\end{tiny}} & \textbf{1.16} \begin{tiny}(0.00)\end{tiny} \\
TERM \citep{li2020tilted} & 1.74 \begin{tiny}(0.00)\end{tiny} & 1.26 \begin{tiny}(0.00)\end{tiny} & \multicolumn{1}{c|}{1.25 \begin{tiny}(0.00)\end{tiny}}  & 1.87 \begin{tiny}(0.01)\end{tiny} & \textbf{1.27} \begin{tiny}(0.01)\end{tiny} & \multicolumn{1}{c|}{\textbf{1.27} \begin{tiny}(0.01)\end{tiny}} & 2.01 \begin{tiny}(0.02)\end{tiny}& \textbf{1.33} \begin{tiny}(0.01)\end{tiny} & \multicolumn{1}{c|}{\textbf{1.31} \begin{tiny}(0.01)\end{tiny}} & 1.23 \begin{tiny}(0.00)\end{tiny} \\ \bottomrule
\end{tabular}%
}
\caption{\small Test RMSE for varying degrees of label noise for ERM, DA-ERM, and DAIR using different losses.}
\label{tab:regression-results}
\end{table}

The results of this experiment are reported in
\Cref{tab:regression-results}. 
In the last column of the table we report results on the clean dataset without any spurious features for comparison purposes.
As can be seen, without data augmentation all methods fall prey to spurious features and perform poorly, especially as the noise level is increased. 
It is noteworthy that while TERM is not designed for domain shift, it slightly outperforms the other baselines in the presence of spurious features showing that TERM has some inherent robustness to the domain shift.
By adopting data augmentation, testing error decreases but is still quite large as compared to the Clean ERM setup for high values of noise. Notably, DAIR is able to reduce the testing error across all objectives and noise levels with the gap between DAIR and other approaches increasing with the degree of noise. For the noiseless setup, DAIR is able to almost recover the Clean ERM accuracy for all three objectives. The gains achieved with DAIR are prominent for $\mathcal{L}_2$ and Huber, but marginal for TERM. Finally, DAIR combined with TERM can simultaneously handle domain shift and noisy labels as can be seen in this table. %

\subsection{ImageNet-9 background challenge}

Deep learning models have outperformed human-level performance in many applications of which the most prominent is image classification. However, these models are extremely vulnerable to overfitting to spurious correlations such as background features. ImageNet-9 Background Challenge~\citep{xiao2020noise} was proposed to test the background robustness of image classification models. In this challenge, seven variations of images such as background/foreground removal,  are provided to measure the extent to which models rely on the background. 
For example, variations Only-BG-B and Only-BG-T remove the backgrounds and therefore the test accuracy is expected to be low for a model which does not learn spurious background features. See \cite[Figure 1]{xiao2020noise} for example images of each variation. 
\cite{xiao2020noise} choose to train a model on Mixed-Rand variation, which is the most powerful and comprehensive augmentation scheme, and demonstrated that the resulting model is more robust.
We choose the Mixed-Rand variation as our augmentation scheme and compare the test accuracy on all seven variations of the models trained by ERM, \citep{xiao2020noise}, DA-ERM, DAIR and KL. 

\add{
\Cref{tab:imgnet-9-results} summarizes the results. We see that DAIR outperforms ERM and DA-ERM by a large margin and is similarly competitive as KL feature consistency regularization (see the third column). In particular, DAIR outperforms DA-ERM on all metrics for which a higher test accuracy is more desirable. DAIR improves performance on the variations which include domain shift, such as Mixed-Same, Mixed-Next and Only-FG. In particular, it also helps Original and the Mixed-Rand variations which are seen during training as well. In this experiment, similar to \Cref{sec:tod}, DAIR not only enhances out-of-domain generalizability/robustness but also gives the best the in-distribution performance (original).  The detailed training setup can be found in \Cref{app:img9-details}
}

\begin{table}[h]
\centering
\resizebox{\textwidth}{!}{%
\begin{tabular}{@{}lcccccccc@{}}
\toprule
       & Original $\uparrow$   & Mixed-Rand $\uparrow$ & Mixed-Same $\uparrow$ & Mixed-Next $\uparrow$ & Only-BG-B $\downarrow$ & Only-BG-T $\downarrow$ & No-FG & Only-FG $\uparrow$ \\ \midrule
ERM (Original)   & 69.43 & 38.57    & 59.63      & 34.27      & 25.83     & 31.06     & 37.04 & 44.00   \\
\citep{xiao2020noise}   & 49.41 & 51.98    & 51.85      &   50.48    &  13.41   & {\bf 13.80}     & 20.05 & 52.22   \\
DA-ERM & 64.02 & 58.05    & 58.81      & 48.17      & 16.91     & 22.27     & 28.44 & 56.62   \\
KL Feature Consistency & 70.84 & {\bf 65.36} & 68.79 & {\bf 64.07} & {\bf 13.23} & 20.81 & 26.77 & {\bf 67.48} \\
DAIR   & {\bf 72.91} & 63.48    & {\bf 69.16}      & 62.02      & 17.19     & 24.02     & 31.88 & 66.15   \\ 
 \bottomrule
\end{tabular}
}
\caption{ImageNet-9 Backgrounds Challenge test accuracy on different shifted test sets. We use Mixed-Rand as augmentation during training for DA-ERM, DAIR, and KL feature consistency. Arrows next to the heading of each column indicate the desired direction of the metric. For example, the accuracy on Original images should be as high as possible but the accuracy on Only-BG-B should be as low as possible.}
\label{tab:imgnet-9-results}
\end{table}

\section{Conclusion}
\label{sec:conclusion}
In this paper, we proposed a simple yet effective consistency regularization technique, called data augmented \add{loss} invariant regularization (DAIR). 
DAIR is applicable when data augmentation is used to promote performance invariance across pairs of original and augmented samples, and it enforces the loss to be similar on the original and the augmented samples. %
We provided motivation and justification for DAIR, and particularly showed that it can recover the optimal solution in a certain regression task where data augmentation alone is insufficient. 
We also compared DAIR with other consistency regularizers and showed that existing consistency regularization techniques cannot handle covariant data augmentation. On the other hand, DAIR is broadly applicable  to tasks involving invariant/covariant data augmentation.   %
We empirically evaluated DAIR in five real-world machine learning tasks, namely task-oriented dialog modeling, visual question answering, robust regression, robust classification against adversarial attacks, and ImageNet-9 background challenge. Our experiments confirmed a major benefit of DAIR as some other consistency regularizers cannot be applied broadly. Empirically, DAIR set new state-of-the-art results in \add{dialog state tracking and VQA} benchmarks \add{which involved covariant data augmentation, and provided competitive results in all other benchmarks}. This demonstrated the effectiveness of DAIR despite its simplicity. 

Several problems remain open for future research: An in-depth theoretical understanding of the properties of DAIR  that lead to its superior empirical performance on broad applications is an important open question. Further, automated hyperparameter tuning techniques for the strength of the regularizer is another avenue for future research. Finally, a discussion on limitations and broader impact appears in \Cref{app:BI}.

\section{Limitations \& Broader Impact}
\label{app:BI}
Firstly, while we demonstrated the success of DAIR when the pairing information between original and augmented training samples is known, the applicability of DAIR remains limited in only setups where such pairing information is available. \add{We also remark that DAIR incurs double the computational cost compared with algorithms which only consider one sample for backpropagation (e.g., only an augmented examples) rather than the pair. 
}

Secondly, applying DAIR to arbitrary tasks requires domain knowledge about the nature of the desired invariances to be promoted which is expressed by choosing appropriate augmented samples. 
In particular, we did not address devising good data augmentation procedures, but rather we argued that if data augmentation is already employed (i.e. DA-ERM is used), DAIR can lead to remarkable gains (almost) with marginally added cost via further regularization of the losses. It remains to examine whether DAIR could be used as a component for solving domain generalization for arbitrary domain shifts where data augmentation pairing cannot be performed in a straightforward fashion.

Thirdly, we demonstrated the effectiveness of DAIR on a variety of supervised tasks involving multimodal, generative and regression models. However, the applicability of DAIR to semi-supervised or self-supervised learning remains to be seen.

Finally, while we showed that DAIR boosts existing performance metrics, such as accuracy, the interplay of DAIR with other important socially consequential metrics, such as group fairness and privacy, was not explored in this paper. It remains to be seen whether DAIR may have any positive or negative consequences on these other Responsible AI metrics.

\bibliography{tmlr}
\bibliographystyle{tmlr}

\clearpage
\appendix

\onecolumn
\section*{Appendix}

\add{
In this appendix, we provide additional linear regression example with theoretical analysis~(\Cref{sec:dair-better}) and its extension~(\Cref{app:toy-extension}); }include proofs of the analyses from \Cref{sec:dair-formulation}~(\Cref{app:dair-proof}); details on MNIST experiments~(\Cref{app:mnist-setup,app:mnist-setup,app:add-res}); practical considerations when DAIR used in training~(\Cref{sec:characteristic-bump}); the impact of partial augmentation~(\Cref{app:partial-aug}); training details on experiments in \Cref{sec:experiments,sec:invariate-exp}~(\Cref{app:robust-reg,app:more-vqa,app:NN-details,app:mw-examples,app:img9-details}). We provide a
table of contents below for easier navigation.

\section*{Contents}
{\small
{\fontfamily{lmss}\selectfont
\begin{itemize}[leftmargin=0in]
    \add{
    \item[] \textbf{\ref{sec:dair-better}~~Additional linear regression example with theoretical analysis\hfill \pageref{sec:dair-better}}
    \item[] \textbf{\ref{app:toy-extension}~~Multi-dimensional extension of linear regression example (\Cref{sec:dair-better})\hfill\pageref{app:toy-extension}}
    }
    \item[] \textbf{\ref{app:dair-proof}~~Proofs\hfill \pageref{app:dair-proof}}
    \begin{itemize}[leftmargin=0.2in]
        \item[] \textbf{\ref{app:proof-sq-lq}~~Proof of relation between DAIR-SQ and DAIR-L1\dotfill \pageref{app:proof-sq-lq}}
        \item[] \textbf{\ref{app:proof-dependence-sq}~~Proof of dependence of DAIR-SQ on $\lambda$ (\Cref{prop:dair-reg-bound}) \dotfill \pageref{app:proof-dependence-sq}}
        \item[] \textbf{\ref{app:sq-convergence}~~Proof of convergence of DAIR-SQ (\Cref{theorem:convergence}) \dotfill \pageref{app:sq-convergence}}
    \end{itemize}
    \item[] \textbf{\ref{app:mnist-setup}~~Model architecture and training parameters for MNIST experiments \hfill \pageref{app:mnist-setup}}
    \begin{itemize}[leftmargin=0.2in]
        \item[] \textbf{\ref{app:cm-setup}~~Colored MNIST \dotfill  \pageref{app:cm-setup}}
        \item[] \textbf{\ref{app:rm-setup}~~Rotated MNIST \dotfill \pageref{app:rm-setup}}
    \end{itemize}
    \item[] \textbf{\ref{app:add-res}~~Additional results on Colored MNIST \& Rotated MNIST \hfill \pageref{app:add-res}}
    \begin{itemize}[leftmargin=0.2in]
            \item[] \textbf{\ref{app:cm-add}~~Colored MNIST \dotfill \pageref{app:cm-add}}
            \item[] \textbf{\ref{app:rm-add}~~Rotated MNIST \dotfill \pageref{app:rm-add}}
    \end{itemize}
    \item[] \textbf{\ref{sec:characteristic-bump}~~Practical considerations when DAIR is used in training \hfill \pageref{sec:characteristic-bump}}
    \begin{itemize}[leftmargin=0.2in]
        \item[] \textbf{\ref{app:sweet-lambda}~~Explanations for a practical sweet spot for the performance of DAIR-SQ as a function of $\lambda$ \dotfill \pageref{app:sweet-lambda}}
        \item[] \textbf{\ref{app:toy-logistic}~~Additional evidence on growing cost of training with the regularization strength \dotfill \pageref{app:toy-logistic}}
        \item[] \textbf{\ref{app:practial-consideration-exp}~~Compare DAIR-SQ and DAIR-L1 on avoidance of spurious local min \dotfill \pageref{app:practial-consideration-exp}}
    \end{itemize}
    \item[] \textbf{\ref{app:partial-aug}~~The impact of partial augmentation \hfill \pageref{app:partial-aug}}

    \item[] \textbf{\ref{app:mw-examples}~~Details on neural task-oriented dialog modeling \hfill \pageref{app:mw-examples}}
    \item[] \textbf{\ref{app:more-vqa}~~Setup and additional results for visual question answering \hfill \pageref{app:more-vqa}}
    \item[] \textbf{\ref{app:robust-reg}~~Additional details on robust regression \hfill \pageref{app:robust-reg}}
    \item[] \textbf{\ref{app:NN-details}~~Details on training robust neural networks \hfill \pageref{app:NN-details}}
    \begin{itemize}[leftmargin=0.2in]
        \item[] \textbf{\ref{app:NN-details-p1}~~Setups for the main results in \Cref{sec:robust} \dotfill \pageref{app:NN-details-p1}}
        \item[] \textbf{\ref{app:NN-details-p2}~~Additional results with new baselines \dotfill \pageref{app:NN-details-p2}}
    \end{itemize}
    \item[] \textbf{\ref{app:img9-details}~~Details on training ImageNet-9 \hfill \pageref{app:img9-details}}

\end{itemize}
}
}

\clearpage

\add{
\section{Additional linear regression example with theoretical analysis}
\label{sec:dair-better}
}

\begin{wrapfigure}[14]{r}{2in}
  \begin{center}
   \vspace{-.28in}
    \includegraphics[width=\linewidth]{../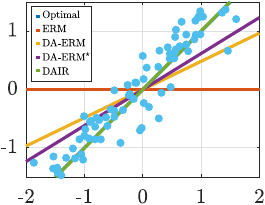}
    \end{center}
   \vspace{-.15in}
  \caption{\small The plot of the optimal, ERM, DA-ERM and DAIR-SQ ($\lambda = 100$) regressors for the toy example of \Cref{sec:dair-better}.
  }
  \label{fig:dair_regression}
\end{wrapfigure}

In this section, we answer the question ``What does DAIR offer beyond DA-ERM?'' both theoretically and empirically through a simple toy example. In this example, we demonstrate that DAIR can fundamentally outperform DA-ERM, even in the limit of infinite training samples (no overfitting due to finite samples). 
Consider a linear regression problem where at the training time the input is $\mathbf{x}_{\text{train}} = (x ,s=y)$ and the label $y$, i.e., $z_{\text{train}} =(\mathbf{x}_{\text{train}}, y)$. Here, $x \sim \mathcal{N}(0,\sigma^2_x),$ and $y = x + \varepsilon,$ where $\varepsilon$ is independent of $x$ and $\varepsilon \sim \mathcal{N}(0,\sigma_\varepsilon^2)$. In this example, the target is explicitly provided as a spurious feature to the learner at the training time.
At test time, the spurious feature is absent, i.e.,  $\mathbf{x}_{\text{test}} = (x, s=0)$. 

Clearly, in this toy example, the optimal regressor is $w^\star=(w_1^\star,w_2^\star)^\top=(1, 0)^\top$. However, absent the knowledge of the spurious feature vanilla ERM will learn $w_{\text{ERM}}\approx (0, 1)^\top,$ completely overfitting the spurious feature.
We assume that the learner has access to a data augmentation module that generates $\widetilde{z} = A(z; a, \sigma^2_{n} ) = (\mathbf{x}_{\text{aug}}, y),$ such that
$\mathbf{x}_{\text{aug}} = (x, s= a y + n)$ where $n \sim \mathcal{N}(0, \sigma^2_{n}).$ The augmented data will encourage the learned model to become invariant to the spurious feature. 
In \Cref{fig:dair_regression}, we perform simulations with $a=0.5,\;\sigma_x^2=1,\;\sigma_\varepsilon^2=0.25,\;\sigma^2_n=0.1$ and plot four linear regressors associated with the slope of their respective $w_1$. We ignore $w_2$ as the second spurious feature is absent at test time and hence $w_2$ does not impact test performance. The optimal regressor is shown as the blue line, with a slope of $1$. ERM (red line) completely fails due to the overfitting to the spurious feature. DA-ERM (orange line) significantly improves over ERM but still is far from optimal performance. \add{$\mbox{DA-ERM}^{\star}$ (purple line) which is solely trained on  augmented examples (ignoring the original examples) slightly outperforms DA-ERM but still significantly overfits to the spurious feature.} DAIR-SQ (green line) almost recovers the optimal solution. This is not a coincidence. We prove that DAIR-SQ is optimal for a class of linear regression problems, while DA-ERM does not approach optimal performance even in the limit of infinite samples. Here we state the rigorous statement, followed by proof.

\begin{proposition}
\label{prop:org_toy}
Consider a linear regression problem with training point $z_{\text{train}} =(\mathbf{x}_{\text{train}}, y)$ where $\mathbf{x}_{\text{train}} = (x ,s=y)$; $y$ denotes label and $s$ denotes the spurious feature. Here, $x \sim \mathcal{N}(0,\sigma^2_x),$ and $y = x + \varepsilon,$ where $\varepsilon$ is independent of $x$ and $\varepsilon \sim \mathcal{N}(0,\sigma_\varepsilon^2)$. Assume the learner has access to a data augmentation module which generates $z_{\text{aug}} =(\mathbf{x}_{\text{aug}}, y)$ where
$\mathbf{x}_{\text{aug}} = (x, s= a y + n)$. Here $n \sim \mathcal{N}(0, \sigma^2_{n})$, and $a\in\mathbb{R}$. At test time, the spurious feature is absent, i.e.,  $\mathbf{x}_{\text{test}} = (x, s=0)$. Both DAIR-SQ and DA-ERM are applied to solve this problem: DAIR-SQ achieves optimal test error as number of samples grows and $\lambda \rightarrow \infty$. On the other hand, DA-ERM cannot generally recover optimal performance even in the limit of infinite training data.
\label{theorem:dair-theorem-formal}
\end{proposition}

\begin{proof}%
First let us present the DA-ERM solution:
\begin{align}
  f_{\text{DA-ERM}}(w) = &\mathbb{E} \left[ (w_1x+w_2y-y)^2 + (w_1x+w_2(a y+{n})-y)^2 \right]\\
    = & \mathbb{E} \left[w_1^2 x^2+(w_2-1)^2y^2+ 2w_1(w_2-1)xy\right]\nonumber\\ 
    &+\mathbb{E} \left[w_1^2x^2+(w_2 a -1 )^2y^2 + w_2^2 n^2\right]\nonumber\\
    &+\mathbb{E} \left[2w_1 (w_2 a -1 )xy + 2w_1 w_2 x n + 2w_2(w_2 a -1 )y n\right]\\
    =&w_1^2 \sigma^2_x + (w_2-1)^2 (\sigma^2_x + \sigma^2_{\varepsilon}) + 2w_1 (w_2-1) \sigma^2_x \nonumber \\
   &+w_1^2\sigma^2_x+(w_2 a -1 )^2(\sigma^2_x + \sigma^2_{\varepsilon}) + w_2^2 \sigma^2_{n}\nonumber\\
    &+2w_1 (w_2 a -1 )\sigma^2_x\\
   =& (w_1 + w_2 - 1)^2\sigma^2_x + (w_2-1)^2 \sigma^2_{\varepsilon} \nonumber\\
   &+(w_1 + w_2 a - 1)^2\sigma^2_x +(w_2 a -1 )^2 \sigma^2_{\varepsilon} + w_2^2 \sigma^2_{n}.
\end{align}
Hence, the solution of $w^\star_{\text{DA-ERM}} = \arg\min_w f_{\text{DA-ERM}}(w)$ is given by
\begin{small}
\begin{align}
    2w^\star_1 + (1+a) w^\star_2 - 2 &= 0, \nonumber\\
    (w^\star_1 + w^\star_2 -1) \sigma^2_x + (w^\star_2-1)\sigma^2_{\varepsilon} + a (w_1^\star + w_2^\star a - 1) \sigma^2_x + a(w_2^\star a - 1) \sigma^2_{\varepsilon} + w_2^\star \sigma^2_{n}&=0.
\end{align}
\end{small}
Subsequently, 
\begin{equation}
w_{\text{DA-ERM}}^\star=\left(\begin{array}{c} 
\frac{a^2(\sigma_x^2+\sigma_\varepsilon^2)-2a(\sigma_x^2+\sigma_\varepsilon^2)+\sigma_x^2+\sigma_\varepsilon^2+2\sigma_n^2}{a^2(\sigma_x^2+2\sigma_\varepsilon^2)-2a\sigma_x^2+\sigma_x+2(\sigma_\varepsilon^2+\sigma_n^2)} \\ \tiny \\ \frac{2(a+1)\sigma_\varepsilon^2}{a^2(\sigma_x^2+2\sigma_\varepsilon^2)-2a\sigma_x^2+\sigma_x^2+2(\sigma_\varepsilon^2+\sigma^2_n)}
\end{array}\right).
\end{equation}

\begin{align*}
    w_{\text{DAIR}}^\star &=\arg\min_{w}f_\text{DAIR}(w)\\ 
    &=\arg\min_{w} \mathbb{E} \left[ (w_1x+w_2y-y)^2 + (w_1x+w_2(ay+{n})-y)^2 \right]\\
    &+\left[\lambda (\left|w_1x+w_2y-y\right| - \left|w_1x+w_2(ay+{n})-y \right|) ^2  \right].\\
\end{align*}

When $\lambda \to \infty$, we have $w^\star_{\text{DAIR}, 2} = 0$ and hence:

$$
w_{\text{DAIR}}^\star=\left(\begin{array}{c} 
1 \\ \tiny \\ 0
\end{array}\right).
$$

We then evaluate the testing loss assuming the spurious feature is absent, i.e.,  $\mathbf{x}_{\text{test}} = (x, s=0)$.

\begin{align*}
    \ell_\text{DAIR}(\mathbf{x}_\text{test};w_{\text{DAIR}}^\star)&=\mathbb{E}\left[({{w_{\text{DAIR}}^\star}}^\top \mathbf{x}_\text{test} - y)^2\right]\\
    &=\mathbb{E}\left[(x-(x+\varepsilon))^2\right]\\
    &= \sigma_\varepsilon^2.
\end{align*}

\begin{align*}
    \ell_\text{DA-ERM}(\mathbf{x}_\text{test};w_{\text{DA-ERM}}^\star)&=\mathbb{E}\left[({{w_{\text{DA-ERM}}^\star}}^\top \mathbf{x}_\text{test} - y)^2\right]\\
    &=\mathbb{E}\left[\left(\frac{a^2(\sigma_x^2+\sigma_\varepsilon^2)-2a(\sigma_x^2+\sigma_\varepsilon^2)+\sigma_x^2+\sigma_\varepsilon^2+2\sigma_n^2}{a^2(\sigma_x^2+2\sigma_\varepsilon^2)-2a\sigma_x^2+\sigma_x+2(\sigma_\varepsilon^2+\sigma_n^2)}x-(x+\varepsilon)\right)^2\right]\\
    &= \sigma_\varepsilon^2+\frac{(a+1)^4\sigma_\varepsilon^4\sigma_x^2}{(a^2(\sigma_x^2+2\sigma_\varepsilon^2)-2a\sigma_x^2+\sigma_x+2(\sigma_\varepsilon^2+\sigma_n^2))^2} \\
    &\geq\ell_\text{DAIR},
\end{align*}
completing the proof.
\end{proof}

One can show that simple data independent regularization methods 
(e.g. weight decay) cannot help close the gap between the performance of DA-ERM and DAIR (see \Cref{prop:weightdecay}). While the toy example presented an extreme case with a spurious feature equal to the output, we prove theoretically that the same conclusion holds as long as a subset of features have different correlation patterns with the output at training and test time (see \Cref{prop:toy-extension} for the general multi-variate linear regression setup).
Note that in this toy example when $\sigma_n\to\infty$, DA-ERM could also recover $w^\star$. One can interpret that as $\sigma_n\to\infty$, the augmenter becomes stronger and forces $w_2$ to vanish. On the other hand, DAIR recovers $w^\star$ with a much weaker augmenter. This is crucial since in real-world applications, designing strong augmentation schemes requires careful design. See the example of \emph{Weak Rotation} setup in \Cref{sec:RMNIST} for details.

\begin{proposition}
\label{prop:weightdecay}
Consider the case in which a weight decay regularizer $\frac{\gamma}{2} (w_1^2 + w_2^2)$ is added to the DA-ERM, the resulting solution is the following:
$$
w^\star_\text{DA-ERM-WD}=\left(\begin{array}{c} 
\frac{a^2(\sigma_\varepsilon^2+\sigma_x^2)-2a(\sigma_\varepsilon^2+\sigma_x^2)+2\gamma+\sigma_\varepsilon^2+2\sigma_n^2+\sigma_x^2}{a^2(\gamma(\sigma_\varepsilon^2+\sigma_x^2)+2\sigma_\varepsilon^2+\sigma_x^2)-2a \sigma_x^2 +\gamma^2 + \gamma(\sigma_\varepsilon^2+\sigma_n^2+\sigma_x^2+2)+2\sigma_\varepsilon^2+2\sigma_n^2+\sigma_x^2} \\ \tiny \\ \frac{(a+1)(\gamma(\sigma_\varepsilon^2+\sigma_x^2)+2\sigma_\varepsilon^2)}{a^2(\gamma(\sigma_\varepsilon^2+\sigma_x^2)+2\sigma_\varepsilon^2+\sigma_x^2)-2a \sigma_x^2 +\gamma^2 + \gamma(\sigma_\varepsilon^2+\sigma_n^2+\sigma_x^2+2)+2\sigma_\varepsilon^2+2\sigma_n^2+\sigma_x^2})
\end{array}\right).
$$
\end{proposition}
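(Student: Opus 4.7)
The plan is a direct computation that extends the DA-ERM analysis of Proposition~\ref{theorem:dair-theorem-formal}. First I would reuse the closed-form expression
\[
f_{\text{DA-ERM}}(w) = (w_1+w_2-1)^2\sigma_x^2 + (w_2-1)^2\sigma_\varepsilon^2 + (w_1+w_2 a-1)^2\sigma_x^2 + (w_2 a-1)^2\sigma_\varepsilon^2 + w_2^2\sigma_n^2,
\]
already derived in that proof, and write the regularized objective as $f_{\text{DA-ERM-WD}}(w) = f_{\text{DA-ERM}}(w) + \tfrac{\gamma}{2}(w_1^2 + w_2^2)$. For $\gamma \geq 0$ the Hessian is positive definite (DA-ERM is convex and weight decay adds $\gamma I$), so $\nabla f_{\text{DA-ERM-WD}}(w) = 0$ pins down the unique global minimum and no further optimality certificate is needed.

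Next I would compute the two partial derivatives explicitly, grouping terms to obtain a $2\times 2$ linear system $A w = b$ whose entries are
\[
A_{11} = 4\sigma_x^2 + \gamma, \quad A_{12} = A_{21} = 2(1+a)\sigma_x^2, \quad A_{22} = 2(1+a^2)(\sigma_x^2 + \sigma_\varepsilon^2) + 2\sigma_n^2 + \gamma,
\]
with right-hand side $b_1 = 4\sigma_x^2$ and $b_2 = 2(1+a)(\sigma_x^2 + \sigma_\varepsilon^2)$. Symmetry of $A$ comes for free from convexity.

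Finally I would apply Cramer's rule to read off $w_1^\star = (A_{22} b_1 - A_{12} b_2)/\det(A)$ and $w_2^\star = (A_{11} b_2 - A_{21} b_1)/\det(A)$, and then regroup the resulting polynomial in $(\gamma, a, \sigma_x^2, \sigma_\varepsilon^2, \sigma_n^2)$ into the form stated. The substantive content of the argument is elementary; the main obstacle is purely algebraic bookkeeping, because both the numerators and the determinant are multivariate polynomials of moderately high total degree, and putting them in the exact stated form requires careful expansion and cancellation (in particular using $(1-a)^2 = 1 - 2a + a^2$ to match the cross-terms in $a$). Two easy sanity checks I would use along the way are: (i) setting $\gamma = 0$ should recover $w_{\text{DA-ERM}}^\star$ from Proposition~\ref{theorem:dair-theorem-formal}, and (ii) letting $\gamma \to \infty$ should send both $w_1^\star$ and $w_2^\star$ to $0$, matching the shrinkage behavior of pure weight decay.
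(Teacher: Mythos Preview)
Your proposal is correct and is exactly the approach the paper takes: the paper's own proof is simply ``The proof follows the same idea of \Cref{theorem:dair-theorem-formal} and therefore it is omitted here,'' so you have in fact supplied more detail than the authors do. Your linear system, the use of Cramer's rule, and the sanity checks are all appropriate; the only remaining work is the algebraic bookkeeping you already flagged.
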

\begin{proof}[Proof of \Cref{prop:weightdecay}]
The proof follows the same idea of \Cref{theorem:dair-theorem-formal} and therefore it is omitted here.
\end{proof}
\Cref{prop:weightdecay} shows that even using the weight decay regularizer would not close the gap between the performance of DA-ERM and DAIR. In other words $w^\star_\text{DA-ERM-WD} \neq w^\star = (1,0)$ unless $\sigma^2_n \to \infty$ and $\gamma = 0$.

\add{
\section{Multi-dimensional extension of linear regression example (\Cref{sec:dair-better})}

\label{app:toy-extension}
}

Consider a multi-dimensional extension of the example in \Cref{sec:dair-better}. During training, the input is $\mathbf{x}_{\text{train}} = [x\;s_\text{train}]^\T \in \mathbb{R}^{d+k}$ and $y = \mathbf{1}^\T x + \varepsilon$, where $x \sim \mathcal{N}(0, \sigma_x^2 I) \in \mathbb{R}^d$, $\varepsilon \sim \mathcal{N}(0, \sigma_\varepsilon^2)$, $s_\text{train} = yv_\text{train}+n_\text{train}$, $v_\text{train}  \in \mathbb{R}^k$ and $n_\text{train} \sim \mathcal{N}(0, \sigma_{n_\text{train}}^2 I) \in \mathbb{R}^k$. Similar to the toy example, we introduce spurious feature $\strain$ to mislead the model. ERM based approach should overfit to use the information in $\strain$ if $\varntrain$ is small. Suppose the learner also has access to the augmented datapoints of the form $\mathbf{x}_{\text{aug}}=[x\;s_\text{aug}]^\T \in \mathbb{R}^{d+k}$, where $s_\text{aug} =  y u_\text{aug}+n_\text{aug}, u_\text{aug} \in \mathbb{R}^k$, and $n_\text{aug} \sim \mathcal{N}(0, \sigma_{n_\text{aug}}^2 I) \in \mathbb{R}^k$. The augmented data points will encourage the model to be invariant to the spurious feature during training. The testing data is $\mathbf{x}_{\text{test}} = [x\;\mathbf{0}]^\T \in \mathbb{R}^{d+k}$. Again, the optimal $w$ is $[\mathbf{1}\; \mathbf{0}]^
\top$ and we compare the performances of DA-ERM and DAIR. One can think of this is a simplified setup that is aimed at emulating a case with some features, e.g., background, bearing no impact on the labels, whereas they may be (highly) correlated with the label during the training. One may guess the results of comparison as this is the extension of the toy example in the main body of the paper: DAIR mostly works better than DA-ERM. We introduce the formal proposition below.

\begin{proposition}
\label{prop:toy-extension}
Consider the linear least squares regression problem  for predicting the target variable~$y$ in  the problem described above. Assume that the learner has access to a data augmentation module that perturbs the spurious feature, as described above. Consider the population level loss (i.e. number of samples is infinity). Then, for any value of $\varntrain$, $\varnaug$, $\vtrain$ and $\naug$, DAIR-SQ achieves optimal test error as $\lambda \rightarrow \infty$. On the other hand, DA-ERM cannot obtain optimal performance (other than for only certain corner cases such as~$\varnaug \to \infty$ and/or $\varntrain \to \infty$).
\end{proposition}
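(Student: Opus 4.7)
The plan is to work at the population level and express both the DA-ERM and DAIR-SQ problems as quadratic optimizations in $(w_1, w_2)$. Introduce the notation $\bar{w} := w_1 - \mathbf{1}$, $\alpha := w_2^\T v_\text{train}$, and $\beta := w_2^\T u_\text{aug}$. Since $\mathbf{x}_\text{test} = [x; \mathbf{0}]^\T$, the test loss reduces to $\mathbb{E}[(\bar{w}^\T x - \varepsilon)^2] = \|\bar{w}\|^2 \sigma_x^2 + \sigma_\varepsilon^2$, so optimal test error is attained iff $\bar{w} = 0$. Using $y = \mathbf{1}^\T x + \varepsilon$ and the independence of $x, \varepsilon, n_\text{train}, n_\text{aug}$, a direct computation yields $\mathbb{E}[\ell(z; \theta)] = \|\bar{w} + \alpha \mathbf{1}\|^2 \sigma_x^2 + (\alpha - 1)^2 \sigma_\varepsilon^2 + \|w_2\|^2 \sigma_{n_\text{train}}^2$, with an analogous formula for the augmented loss using $(\beta, \sigma_{n_\text{aug}}^2)$.

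For DA-ERM, setting the gradient in $w_1$ to zero yields $\bar{w} = -\frac{\alpha+\beta}{2}\mathbf{1}$. Substituting this back and differentiating in $w_2$ produces a linear system of the form $M w_2 = \sigma_\varepsilon^2 (v_\text{train} + u_\text{aug})$, where $M := \frac{d\sigma_x^2}{2}(v_\text{train}-u_\text{aug})(v_\text{train}-u_\text{aug})^\T + \sigma_\varepsilon^2(v_\text{train}v_\text{train}^\T + u_\text{aug}u_\text{aug}^\T) + (\sigma_{n_\text{train}}^2+\sigma_{n_\text{aug}}^2) I$ is positive definite. Then $w_2^\T(v_\text{train}+u_\text{aug}) = \sigma_\varepsilon^2 (v_\text{train}+u_\text{aug})^\T M^{-1}(v_\text{train}+u_\text{aug}) > 0$ unless $\sigma_\varepsilon = 0$ or $v_\text{train} + u_\text{aug} = 0$, so $\alpha + \beta \neq 0$ and hence $\bar{w} \neq 0$ generically. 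The limit $\sigma_{n_\text{train}}^2 + \sigma_{n_\text{aug}}^2 \to \infty$ drives $M^{-1} \to 0$ and recovers optimality; these, together with $\sigma_\varepsilon = 0$ and $v_\text{train} + u_\text{aug} = 0$, are the only corner cases.

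For DAIR-SQ with $\lambda \to \infty$, the optimizer must make the regularizer vanish, i.e., $\mathbb{E}[(\sqrt{\ell(z; \theta)}-\sqrt{\ell(\widetilde{z}; \theta)})^2] = 0$; since $\ell \geq 0$ this is equivalent to $\ell(z; \theta) = \ell(\widetilde{z}; \theta)$ almost surely. Writing $A := w_1^\T x + w_2^\T s_\text{train} - y$ and $B := w_1^\T x + w_2^\T s_\text{aug} - y$, this reads $(A-B)(A+B) = 0$ a.s. Since $(A-B, A+B)$ is jointly Gaussian, absolute continuity of its joint distribution forces either $A - B = 0$ a.s.\ or $A + B = 0$ a.s. A variance calculation rules out $A = -B$ a.s.\ whenever $\sigma_\varepsilon > 0$, because $\mathrm{Var}(A+B)$ inherits a term $(\alpha + \beta - 2)^2 \sigma_\varepsilon^2$ that cannot be cancelled once the $w_2$-dependent terms are set to vanish. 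Thus $A = B$ a.s., and matching variances of $A - B = (w_2^\T(v_\text{train}-u_\text{aug}))y + w_2^\T(n_\text{train} - n_\text{aug})$ gives $(w_2^\T(v_\text{train}-u_\text{aug}))^2(d\sigma_x^2+\sigma_\varepsilon^2) + \|w_2\|^2(\sigma_{n_\text{train}}^2+\sigma_{n_\text{aug}}^2) = 0$, which forces $w_2 = 0$ whenever $\sigma_{n_\text{train}}^2 + \sigma_{n_\text{aug}}^2 > 0$. Plugging $w_2 = 0$ into the unregularized part of the objective reduces it to $\mathbb{E}[(w_1^\T x - y)^2] = \|\bar{w}\|^2 \sigma_x^2 + \sigma_\varepsilon^2$, uniquely minimized at $w_1 = \mathbf{1}$, giving $w^\star = [\mathbf{1};\,\mathbf{0}]$.

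The main obstacle is the measure-theoretic step inside the DAIR-SQ analysis: one must carefully justify that $A^2 = B^2$ almost surely precludes non-trivial $w_2$, despite the two sign branches $A = \pm B$. The cleanest route is joint Gaussianity of $(A-B, A+B)$ (so the branch set has full measure only if one branch holds a.s.) together with the independence of $n_\text{train}$ and $n_\text{aug}$, which makes the variance identities above produce the desired rigidity. Everything else is bookkeeping on first-order conditions of quadratic forms.
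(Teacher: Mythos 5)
Your proof is correct, and it follows the same skeleton as the paper's argument (population-level quadratic losses; first-order conditions showing DA-ERM's optimum differs from $[\onevec^\T\;\mathbf{0}^\T]^\T$; the $\lambda\to\infty$ limit forcing the regularizer to vanish almost surely, hence $w_2=\mathbf{0}$, hence $w_1=\onevec$), but your execution differs in two useful ways. For DA-ERM, the paper writes the full normal equations $Qw=-b$ and argues genericity somewhat informally (a counting argument over free variables, with corner cases like $\varntrain\to\infty$); you instead eliminate $w_1$ via $\bar w=-\tfrac{\alpha+\beta}{2}\onevec$ and reduce to $Mw_2=\sigma_\varepsilon^2(\vtrain+\uaug)$ with $M\succ 0$, so that $w_2^\T(\vtrain+\uaug)>0$ unless $\sigma_\varepsilon=0$ or $\vtrain+\uaug=0$ — a cleaner and more explicit characterization of exactly when DA-ERM fails and of the corner cases. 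For DAIR, the paper asserts that $(y-x^\T w_1-s_\text{train}^\T w_2)^2=(y-x^\T w_1-s_\text{aug}^\T w_2)^2$ a.e.\ ``implies $w_2=\mathbf{0}$'' without addressing the two sign branches $A=\pm B$; your Gaussian branch argument plus the variance identities genuinely closes that gap. Two small caveats: your phrase ``absolute continuity of the joint distribution'' is not quite the right justification (the joint law can be degenerate); what you need, and what your argument effectively uses, is that $A-B$ and $A+B$ are centered Gaussians, so $\Pr\{(A-B)(A+B)=0\}=1$ forces one of them to be degenerate at zero. Also, both the positive-definiteness of $M$ and the conclusion $w_2=\mathbf{0}$ from $\mathrm{Var}(A-B)=0$ implicitly require $\varntrain+\varnaug>0$; this nondegeneracy is likewise implicit in the paper's proof, so it is not a gap relative to the paper, but it is worth stating.
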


\begin{proof}
Assuming the linear least squares regression fit, the objective function of DA-ERM can be written as
\begin{align*}
    2\E [f_{\text{DA-ERM}}(w)] &= \E \left[(w^\T \xtrain -y)^2 +(w^\T \xaug -y)^2 \right]\\
    &= \E \left[(w_1^\T x + w_2^\T \strain -y)^2 +(w_1^\T x + w_2^\T \saug -y)^2 \right]\\
    &= \E \left[(w_1^\T x + w_2^\T (\vtrain y +\ntrain) -y)^2 +(w_1^\T x + w_2^\T (\uaug y+\naug) -y)^2 \right]\\
    &=\E \left[(w_1^\T x + w_2^\T (\vtrain (\onevec^\T x + \varepsilon) +\ntrain) -\onevec^\T x - \varepsilon)^2\right]\\
    &\quad+\E\left[(w_1^\T x + w_2^\T (\uaug(\onevec^\T x + \varepsilon)+\naug) -\onevec^\T x - \varepsilon)^2 \right]\\
    &=\E \left[(x^\T(w_1+\onevec \vtrain^\T w_2-\onevec)+\varepsilon(w_2^\T\vtrain-1)+\ntrain^\T w_2)^2\right]\\
    &\quad+\E \left[(x^\T(w_1+\onevec \uaug^\T w_2-\onevec)+\varepsilon(w_2^\T\uaug-1)+\naug^\T w_2)^2\right]\\
    &= \varx\|w_1+\onevec \vtrain^\T w_2-\onevec\|^2+\varvarepsilon(w_2^\T\vtrain-1)^2 + \varntrain\|w_2\|^2 \\
    &\quad+\varx\|w_1+\onevec \uaug^\T w_2-\onevec\|^2+\varvarepsilon(w_2^\T\uaug-1)^2 + \varnaug\|w_2\|^2,\\
\end{align*}
where $w = [w_1^\T w_2^\T]^\T$ with $w_1\in\mathbb{R}^d$ and $w_2 \in \mathbb{R}^k$.
Expanding the norms will result in
\begin{align*}
    2\E [f_{\text{DA-ERM}}(w)]&=\varx[\wt \widehat{I} w + 2 \wt \widehat{\onevec} \vtraintt w - 2\widehat{\onevec}^{\top} w + \wt \vtraint \widetilde{\onevec}^{\top} \widetilde{\onevec}  \vtraintt w-2\widetilde{\onevec}^{\top}\widetilde{\onevec}\vtraintt w + \widehat{\onevec}^{\top} \widehat{\onevec}] \\
    &\quad + \varvarepsilon [\wt \vtraint \vtraintt w - 2 \wt \vtraint + 1] + \varntrain \wt \widetilde{I} w \\
    &\quad+\varx[\wt \widehat{I} w + 2 \wt \widehat{\onevec} \uaugtt w - 2\widehat{\onevec}^{\top} w + \wt \uaugt \widetilde{\onevec}^{\top} \widetilde{\onevec}  \uaugtt w-2\widetilde{\onevec}^{\top}\widetilde{\onevec}\uaugtt w + \widehat{\onevec}^{\top} \widehat{\onevec}] \\
    &\quad + \varvarepsilon [\wt \uaugt \uaugtt w - 2 \wt \uaugt + 1] + \varnaug \wt \widetilde{I} w \\
    &= \wt [\varx \widehat{I} + \varx \vtraint \widehat{\onevec}^{\top} + \varx  \widehat{\onevec}\vtraint^{\top} +\varx \vtraint \widetilde{\onevec}^{\top}\widetilde{\onevec} \vtraintt + \varvarepsilon \vtraint \vtraintt + \varntrain \widetilde{I}] w \\
    &\quad + (-2\varx \widehat{\onevec} -2\varx \vtraint\widetilde{\onevec}^{\top}\widetilde{\onevec}-2\varvarepsilon\vtraint)^\top w \\ 
    &\quad + \wt [\varx \widehat{I} + \varx \uaugt \widehat{\onevec}^{\top} + \varx  \widehat{\onevec}\uaugt^{\top} +\varx \uaugt \widetilde{\onevec}^{\top}\widetilde{\onevec} \uaugtt + \varvarepsilon \uaugt \uaugtt + \varnaug \widetilde{I}] w \\
    &\quad + (-2\varx \widehat{\onevec} -2\varx \uaugt\widetilde{\onevec}^{\top}\widetilde{\onevec}-2\varvarepsilon\uaugt)^\top w \\ 
     &\quad + 2(\varx \widehat{\onevec}^{\top} \widehat{\onevec} + \varvarepsilon) 
\end{align*}
where  $\vtraint=[\mathbf{0}^\T\; \vtrain^\T]^{\top}$, $\uaugt=[\mathbf{0}^\T\; \uaug^\T]^{\top}$, $\widetilde{I} = \left[\begin{array}{cc}
    \mathbf{0} & \mathbf{0}  \\
    \mathbf{0} & I 
\end{array}\right]^{\top}$, $\widehat{I} = \left[\begin{array}{cc}
    I & \mathbf{0}  \\
    \mathbf{0} & \mathbf{0} 
\end{array}\right]^{\top}$, $\widetilde{\onevec} = [\mathbf{0}^\T\;\onevec^\T]^{\top}$ and $\widehat{\onevec} = [\onevec^\T\;\mathbf{0}^\T]^{\top}$.

By optimality condition, we have:

\begin{equation}
\label{eq:tempOptCond}
    Q w_{\text{DA-ERM}}^\star= -b
\end{equation}
where 
{
\small
\begin{align*}
    Q&=\varx(\vtraint \widehat{\onevec}^{\top} + \widehat{\onevec} \vtraintt+\uaugt \widehat{\onevec}^{\top} + \widehat{\onevec} \uaugtt + \vtraint\widetilde{\onevec}^{\top}\widetilde{\onevec}\vtraintt+ \uaugt\widetilde{\onevec}^{\top}\widetilde{\onevec}\uaugtt)\\
    &\;\;\;\;+2\varx\widehat{I}+(\varntrain+\varnaug)\widetilde{I}+\varvarepsilon(\vtraint\vtraintt+\uaugt\uaugtt)\\
    b&=\varx(-2\widehat{\onevec}
    +(-\vtraint-\uaugt)\widetilde{\onevec}^{\top}\widetilde{\onevec})+\varvarepsilon(-\vtraint-\uaugt).
\end{align*}
}
\end{proof}
One can check that  $w_{\text{DA-ERM}}^\star \neq \left[
\begin{array}{c}
     \onevec\\
     \mathbf{0}
\end{array}
\right]$ for generic choice of $\vtrain,\uaug,\varx,\varnaug,\varvarepsilon,\varntrain$. This is because we have more free variables than the number of equations in~\eqref{eq:tempOptCond}. Thus, $w_{\text{DA-ERM}}^\star$ cannot recover the optimal regressor for the generic choice of parameters. More specifically, only in certain corner cases $w_{\text{DA-ERM}}^\star$ would recover the optimal regressor $\left[\begin{array}{c}
     \onevec\\
     \mathbf{0}
\end{array}
\right]$. For example, when $\varntrain \rightarrow +\infty$, we have $\frac{1}{\varntrain}Q \rightarrow \widetilde{I}$ and $\frac{1}{\varntrain} b \rightarrow \mathbf{0}$. Thus, in this corner case, the optimality condition~\eqref{eq:tempOptCond} is asymptotically satisfied.

On the other hand, in the presence of DAIR regularizer, when $\lambda \rightarrow \infty$, the loss function in~\eqref{eq:DAIR-loss} remains finite if and only if $\mathcal{R}(\ell(z;w) ,\ell(\tilde{z};w)) = 0$, almost everywhere. Equivalently, the objective in~\eqref{eq:DAIR-loss} remains finite (when $\lambda \rightarrow \infty$) if and only if
\[
(y - x^\T w_1 - \strain^\T w_2)^2 = (y - x^\T w_1 - \saug^\T w_2)^2,
\]
for almost all realizations of data, which implies $w_2 = \mathbf{0}$. Thus, when $\lambda \rightarrow \infty$, $w_{\text{DAIR}}^\star \rightarrow [w^{\star \T}_{1,\text{DAIR}},w^{\star\T}_{2,\text{DAIR}}]$ with $w^{\star}_{2,\text{DAIR}} = \mathbf{0}$. In other words,  the coefficient of the spurious features vanishes as $\lambda\rightarrow \infty$ and hence the regression will recover the groundtruth regressor $\left[\begin{array}{c}
     \onevec\\
     \mathbf{0}
\end{array}
\right]$.

\section{Proofs}
\label{app:dair-proof}

\subsection{Proof of relation between DAIR-SQ and DAIR-L1 (\Cref{lemma:diff})}
\label{app:proof-sq-lq}
\begin{proof}[Proof of \Cref{lemma:diff}] We proceed as follows:
\begin{align*}
    \mathcal{R}_{\text{L1}}(z, \widetilde{z};\theta) -\mathcal{R}_\text{sq}(z, \widetilde{z}; \theta)&=2\sqrt{\min\{\ell(z; \theta), \ell(\widetilde{z}; \theta) \}} \left| \sqrt{\ell(\widetilde{z}; \theta)}-\sqrt{\ell(z; \theta)}\right|,
\end{align*}

We break it into two cases: if $\ell(\widetilde{z};\theta) > \ell(z;\theta)$:
\begin{align*}
    \mathcal{R}_{\text{L1}}(z, \widetilde{z};\theta) -\mathcal{R}_\text{sq}(z, \widetilde{z};\theta) &= \ell(\widetilde{z}; \theta) - \ell(z; \theta)  - (\sqrt{\ell(\widetilde{z};\theta)}-\sqrt{\ell(z;\theta)}) ^2 \\
    & = \ell(\widetilde{z}; \theta) - \ell(z; \theta) - \ell(\widetilde{z};\theta)-\ell(z;\theta)+2\sqrt{\ell(\widetilde{z};\theta)}\sqrt{\ell(z;\theta)}\\
    &=-2\ell(z; \theta)+2\sqrt{\ell(\widetilde{z};\theta)}\sqrt{\ell(z;\theta)}\\
    &=2\sqrt{\ell(z;\theta)}(\sqrt{\ell(\tilde{z};\theta)}-\sqrt{\ell(z;\theta)}).
\end{align*}

If $\ell(\widetilde{z};\theta) \leq \ell(z;\theta)$:
\begin{align*}
     \mathcal{R}_{\text{L1}}(z, \widetilde{z};\theta) -\mathcal{R}_\text{sq}(z, \widetilde{z};\theta) &= \ell(z; \theta) - \ell(\widetilde{z}; \theta)  - (\sqrt{\ell(\widetilde{z};\theta)}-\sqrt{\ell(z;\theta)}) ^2 \\
    & = \ell(z; \theta) - \ell(\widetilde{z}; \theta) - \ell(\widetilde{z};\theta)-\ell(z;\theta)+2\sqrt{\ell(\widetilde{z};\theta)}\sqrt{\ell(z;\theta)}\\
    &=-2\ell(\widetilde{z};\theta)+2\sqrt{\ell(\widetilde{z};\theta)}\sqrt{\ell(z;\theta)}\\
    &=2\sqrt{\ell(\tilde{z};\theta)}(\sqrt{\ell(z;\theta)}-\sqrt{\ell(\tilde{z};\theta)}).
\end{align*}

If we combine the two cases, we have: 
\begin{align*}
     \mathcal{R}_{\text{L1}}(z, \widetilde{z};\theta) -\mathcal{R}_\text{sq}(z, \widetilde{z};\theta) = 2\sqrt{\min\{\ell(z;\theta),\ell(\widetilde{z};\theta)\}}\left| \sqrt{\ell(\widetilde{z}; \theta)}-\sqrt{\ell(z; \theta)}\right|.
\end{align*}
\end{proof}

\subsection{Proof of dependence of DAIR-SQ on $\lambda$ (\Cref{prop:dair-reg-bound})}
\label{app:proof-dependence-sq}
\begin{proof}[Proof of \Cref{prop:dair-reg-bound}]
We start the proof with the objective value.

\begin{align}
\Ehat_z \left\{ \dfrac{\ell(z; \theta^{\star}_{\lambda}) + \ell(\widetilde{z}; \theta^{\star}_{\lambda})}{2} + \lambda\mathcal{R}_\text{sq}(\ell(z; \theta^{\star}_{\lambda}), \ell(\widetilde{z}; \theta^{\star}_{\lambda}))\right\}&\leq \Ehat_z\left\{ \dfrac{\ell(z; \widetilde{\theta}) + \ell(\widetilde{z}; \widetilde{\theta})}{2} + \lambda\mathcal{R}_\text{sq}(\ell(z; \widetilde{\theta}), \ell(\widetilde{z}; \widetilde{\theta}))\right\} \label{bd_eq1} \\
\Ehat_z \left\{\dfrac{\ell(z; \theta^{\star}_{\lambda}) + \ell(\widetilde{z}; \theta^{\star}_{\lambda})}{2} + \lambda\mathcal{R}_\text{sq}(\ell(z; \theta^{\star}_{\lambda}), \ell(\widetilde{z}; \theta^{\star}_{\lambda}))\right\}&\leq \Ehat_z \left\{ \dfrac{\ell(z; \widetilde{\theta}) + \ell(\widetilde{z}; \widetilde{\theta})}{2}\right\} \label{bd_eq2}\\
\Ehat_z \Bigg\{\lambda\mathcal{R}_\text{sq}(\ell(z; \theta^{\star}_{\lambda}), \ell(\widetilde{z}; \theta^{\star}_{\lambda}))\Bigg\}&\leq \Ehat_z \left\{ \dfrac{\ell(z; \widetilde{\theta}) + \ell(\widetilde{z}; \widetilde{\theta})}{2} - \dfrac{\ell(z; \theta^{\star}_{\lambda}) + \ell(\widetilde{z}; \theta^{\star}_{\lambda})}{2}\right\} \label{bd_eq3}\\
\Ehat_z \Bigg\{ \lambda\mathcal{R}_\text{sq}(\ell(z; \theta^{\star}_{\lambda}), \ell(\widetilde{z}; \theta^{\star}_{\lambda}))\Bigg\}& \leq \Ehat_z \left\{  \dfrac{\ell(z; \widetilde{\theta}) + \ell(\widetilde{z}; \widetilde{\theta})}{2}\right\} \label{bd_eq4}\\
\Ehat_z\Bigg\{ \mathcal{R}_\text{sq}(\ell(z; \theta^{\star}_{\lambda}), \ell(\widetilde{z}; \theta^{\star}_{\lambda}))\Bigg\}&\leq\Ehat_z \left\{  \dfrac{\ell(z; \widetilde{\theta}) + \ell(\widetilde{z}; \widetilde{\theta})}{2\lambda}\right\}.
\end{align}
Note \eqref{bd_eq1} holds since $\theta^{\star}_{\lambda}$ is the minimizer of the $f_{\text{DAIR},\mathcal{R}_\text{sq}, \lambda}(z, \widetilde{z}; \theta)$. \eqref{bd_eq2}~and~\eqref{bd_eq4} hold since $\mathcal{R}(\cdot)$ and $\ell(\cdot)$ are non-negative respectively. 
\end{proof}

\subsection{Proof of convergence of DAIR-SQ (\Cref{theorem:convergence})}
\label{app:sq-convergence}

We first present DAIR applied to training neural networks with Gradient Descent (GD) and followed by proof of \Cref{theorem:convergence}.
\begin{algorithm}[H]
\caption{Training Neural Networks with GD}\label{alg:dair-alg}
\begin{algorithmic}[1]
\State \textbf{Input:} Number of steps $T$, Training set $\mathcal{S}$, Learning Rate $\eta$, Initialized Parameter $\theta^0$
\For{$t=1,2,\ldots,T$}
\State Compute $\nabla_{\theta} \Ehat f_{\text{DAIR},\mathcal{R}, \lambda}(z_i, \widetilde{z_i}; \theta^t)$.
\State Set $\theta^{t+1}=\theta^t - \eta \nabla_{\theta} \Ehat f_{\text{DAIR},\mathcal{R}, \lambda}(z_i, \widetilde{z_i}; \theta^t)$.
\EndFor
\end{algorithmic}
\end{algorithm}
\begin{remark}
\Cref{alg:dair-alg} shows DAIR applied to training neural networks with GD. Note \Cref{alg:dair-alg} can also be extended to the stochastic setting, which is the variant we used in the experiments of \Cref{sec:experiments,sec:invariate-exp}.
\end{remark}

\begin{proof}[Proof of \Cref{theorem:convergence}.]
The objective function of DAIR is the following:
\begin{align*}
    f_\text{DAIR}(\bx,\widetilde{\bx},y;\theta)&= f_\text{{DA-ERM}} (\bx,\widetilde{\bx},y;\theta) + \lambda \left(\sqrt{\ell(\bx,y;\theta)} - \sqrt{\ell(\widetilde{\bx},y;\theta)}\right)^2\\
    &={\frac{1}{2}\left(\ell(\bx,y;\theta) + \ell(\widetilde{\bx},y;\theta)\right)} + \lambda \left(\sqrt{\ell(\bx,y;\theta)} - \sqrt{\ell(\widetilde{\bx},y;\theta)}\right)^2.
\end{align*}

Substituting logistic loss function, the above equation reduces to
\begin{multline*}
    f_\text{DAIR}(\bx,\widetilde{\bx},y;\theta)=\frac{1}{2}\underbrace{\left(\log(1+\exp(\zeta_1(\bx,y,\theta))) + \log(1+\exp(\zeta_2(\widetilde{\bx},y,\theta)))\right)}_{L(\theta)=h(\bm{\zeta}(\theta))} \\ +\lambda\underbrace{ \left(\sqrt{\log(1+\exp(\zeta_1(\bx,y,\theta)))}- \sqrt{\log(1+\exp(\zeta_2(\widetilde{\bx},y,\theta)))}\right)^2}_{\mathcal{R}_{\text{sq}}(\theta)=g(\bm{\zeta}(\theta))},
\end{multline*}
where $\bm{\zeta}=[\zeta_1(\cdot)\; \zeta_2(\cdot)]^\top$, $\zeta_1(\bx,y,\theta)=-y\theta^\top\bx$ and $\zeta_2(\widetilde{\bx}, y, \theta) =-y\theta^\top\widetilde{\bx}$. In order to obtain the convergence rate of gradient descent, we need to compute the Lipschitz constant of the gradient of $f_\text{DAIR}$.  To this end, we need to bound the Hessian of $\mathcal{R}_{\text{sq}}(\theta)$.
Applying chain rule to this function, we obtain:
\begin{align*}
    \nabla^2_\theta \mathcal{R}_{\text{sq}}(\theta) = \nabla_\theta \bm{\zeta}(\theta)^\top \nabla^2_{\bm{\zeta}} g(\bm{\zeta}) \nabla_\theta \bm{\zeta}(\theta) + \diffp{g}{{\zeta_1}}\nabla^2_\theta \zeta_1(\theta)  + \diffp{g}{{\zeta_2}}\nabla^2_\theta \zeta_2(\theta)
    =\left[ \begin{array}{c}
         -y\bx^\top  \\
         \\
         -y\widetilde{\bx}^\top 
    \end{array}\right]^\top \nabla^2_{\bm{\zeta}} g(\bm{\zeta}) \left[ \begin{array}{c}
         -y\bx^\top  \\
         \\
         -y\widetilde{\bx}^\top,
    \end{array}\right].
\end{align*}
Where the last inequality is due to the fact that $\nabla^2_\theta \zeta_1(\theta) = \nabla^2_\theta \zeta_2(\theta) = 0$ since $\zeta_1(\cdot)$ and $\zeta_2(\cdot)$ are linear functions in $\theta$. Recall $g(\bm{\zeta})=\left(\sqrt{\log(1+\exp(\zeta_1))}- \sqrt{\log(1+\exp(\zeta_2))}\right)^2$, which implies:

{
\small
\begin{align*}
    \left(\nabla^2_{\bm{\zeta}} g(\bm{\zeta})\right)_{11} &= 
     \frac{\exp{\zeta_1\left(-2 \log \left(\exp{\zeta_1}+1\right) \sqrt{\log \left(\exp{\zeta_2}+1\right)}+\exp{\zeta_1} \sqrt{\log \left(\exp{\zeta_2}+1\right)}+2 \log ^{\frac{3}{2}}\left(\exp{\zeta_1}+1\right)\right)}}{2\left(\exp{\zeta_1}+1\right)^{2} \log ^{\frac{3}{2}}\left(\exp{\zeta_1}+1\right)},\\\\
     \left(\nabla^2_{\bm{\zeta}} g(\bm{\zeta})\right)_{12} &= \frac{\exp{(\zeta_1+\zeta_2)}}{2\left(\exp{\zeta_1}+1\right)\left(\exp{\zeta_2}+1\right) \sqrt{\log \left(\exp{\zeta_1}+1\right)} \sqrt{\log \left(\exp{\zeta_2}+1\right)}},\\\\
     \left(\nabla^2_{\bm{\zeta}} g(\bm{\zeta})\right)_{21} &= \frac{\exp{(\zeta_1+\zeta_2)}}{2\left(\exp{\zeta_1}+1\right)\left(\exp{\zeta_2}+1\right) \sqrt{\log \left(\exp{\zeta_1}+1\right)} \sqrt{\log \left(\exp{\zeta_2}+1\right)}},\\\\
     \left(\nabla^2_{\bm{\zeta}} g(\bm{\zeta})\right)_{22} &= \frac{\exp{\zeta_2\left(-2 \log \left(\exp{\zeta_2}+1\right) \sqrt{\log \left(\exp{\zeta_1}+1\right)}+\exp{\zeta_2} \sqrt{\log \left(\exp{\zeta_1}+1\right)}+2 \log ^{\frac{3}{2}}\left(\exp{\zeta_2}+1\right)\right)}}{2\left(\exp{\zeta_2}+1\right)^{2} \log ^{\frac{3}{2}}\left(\exp{\zeta_2}+1\right)}.
\end{align*}
}

The Lipschitz constant of the gradient is equal to the spectral norm of the Hessian. To bound that, we use the fact that the spectral norm is bounded by the Frobenius norm and hence we need to bound each individual entry of $\nabla^2_{\bm{\zeta}} g(\bm{\zeta})$. To this end, we  will leverage the following inequality throughout our process:
\begin{equation}
1-\frac{1}{\varpi}\leq\log \varpi \leq \varpi-1,\quad \forall \varpi > 0. \label{eq:templogInqu}
\end{equation}

We now bound $(\nabla^2_{\bm{\zeta}} g(\bm{\zeta}))_{11}$.
Notice that, using \eqref{eq:templogInqu}, we have: 
\begin{align*}
0 \leq \dfrac{\left(\exp \zeta_{1}\right)^{2}}{\left(\exp \zeta_{1}+1\right)^{2} \log ^{\frac{3}{2}}\left(\exp \zeta_{1}+1\right)}
 \leq  \dfrac{\left(\exp \zeta_{1}\right)^{2}}{\left(\exp \zeta_{1}+1\right)^{2}\left(1-\dfrac{1}{1+\exp \zeta_{1}}\right)^{\frac{3}{2}}} 
= \dfrac{\sqrt{e^{\zeta_{1}}}}{\sqrt{e^{\zeta_{1}}+1}} 
 \leq 1,
\end{align*}

\begin{align*}
0 \leq \dfrac{\exp \zeta_{1}\log \left(\exp{\zeta_1}+1\right)}{\left(\exp \zeta_{1}+1\right)^{2} \log ^{\frac{3}{2}}\left(\exp \zeta_{1}+1\right)}
 \leq  \dfrac{\left(\exp \zeta_{1}\right)^{2}}{\left(\exp \zeta_{1}+1\right)^{2}\left(1-\dfrac{1}{1+\exp \zeta_{1}}\right)^{\frac{3}{2}}} 
= \dfrac{\sqrt{e^{\zeta_{1}}}}{\sqrt{e^{\zeta_{1}}+1}} 
 \leq 1,
\end{align*}
and
$
0 \leq \dfrac{\exp \zeta_{1}}{\left(\exp \zeta_{1}+1\right)^{2}} \leq  \frac{1}{4}.
$
Putting these pieces together, we obtain
$
    -\sqrt{\log \left(\exp{\zeta_2}+1\right)} \leq (\nabla^2_{\bm{\zeta}} g(\bm{\zeta}))_{11} \leq \frac{1}{2}\sqrt{\log \left(\exp{\zeta_2}+1\right)}+\frac{1}{8},
$
which in term implies
\[
 (\nabla^2_{\bm{\zeta}} g(\bm{\zeta}))_{11} = \mathcal{O}(\sqrt{\mathcal{D}_x\mathcal{D}_\theta}).
\]
Similarly, we can obtain
$(\nabla^2_{\bm{\zeta}} g(\bm{\zeta}))_{22} = \mathcal{O}(\sqrt{\mathcal{D}_x\mathcal{D}_\theta}).$
We now bound $(\nabla^2_{\bm{\zeta}} g(\bm{\zeta}))_{12}$ and $(\nabla^2_{\bm{\zeta}} g(\bm{\zeta}))_{21}$.
By~\eqref{eq:templogInqu}, we have:
\begin{align*}
    0 \leq \frac{\exp{\zeta_1}}{(1+\exp{\zeta_1})\sqrt{\log(1+\exp{\zeta_1})}}&\leq\frac{\exp{\zeta_1}}{(1+\exp{\zeta_1})\sqrt{1-\frac{1}{1+\exp{\zeta_1}}}}\\
    &=\frac{\exp{\zeta_1}}{\sqrt{(1+\exp{\zeta_1)^2-(1+\exp{\zeta_1})}}}\\
    &=\frac{\exp{\zeta_1}}{\sqrt{1+\exp{\zeta_1}}\sqrt{\exp{\zeta_1}}}
    \leq 1.
\end{align*}
Therefore, we have $(\nabla^2_{\bm{\zeta}} g(\bm{\zeta}))_{12} \leq \frac{1}{2} $ and $
    (\nabla^2_{\bm{\zeta}} g(\bm{\zeta}))_{21}  \leq \frac{1}{2} $.
Given the bounds of the four entries of $\nabla^2_{\bm{\zeta}} g(\bm{\zeta})$ above, we have 
\[\|\nabla^2_{\bm{\zeta}} g(\bm{\zeta})\|_2=\mathcal{O}(\sqrt{\mathcal{D}_x\mathcal{D}_\theta}).\]

Recall $\nabla^2_\theta \mathcal{R}_{\text{sq}}(\theta) = \left[ \begin{array}{c}
         -y\bx^\top  \\
         \\
         -y\widetilde{\bx}^\top 
    \end{array}\right]^\top \nabla^2_{\bm{\zeta}} g(\bm{\zeta}) \left[ \begin{array}{c}
         -y\bx^\top  \\
         \\
         -y\widetilde{\bx}^\top 
    \end{array}\right]$ and the boundedness assumption on $\bx$ and $\widetilde{\bx}$, finally we have:
\begin{align*}
\|\nabla^2_\theta \mathcal{R}_{\text{sq}}(\theta)\|_2 
\leq \left\| \begin{array}{c}
         -y\bx^\top  \\
         \\
         -y\widetilde{\bx}^\top 
    \end{array}\right\|_2 \|\nabla^2_{\bm{\zeta}} g(\bm{\zeta})\|_2 \left\| \begin{array}{c}
         -y\bx^\top  \\
         \\
         -y\widetilde{\bx}^\top 
    \end{array}\right\|_2 
    = \mathcal{O}(\mathcal{D}_x^2\sqrt{\mathcal{D}_x\mathcal{D}_\theta}).
\end{align*}
We now find $\|\nabla^2_\theta L(\theta)\|_2$ using similar approach. Notice that
\begin{align*}
    \nabla^2_\theta L(\theta) &= \left[ \begin{array}{c}
         -y\bx^\top  \\
         \\
         -y\widetilde{\bx}^\top 
    \end{array}\right]^\top \nabla^2_{\bm{\zeta}} h(\bm{\zeta}) \left[ \begin{array}{c}
         -y\bx^\top  \\
         \\
         -y\widetilde{\bx}^\top 
    \end{array}\right] \quad \textrm{and}\quad 
    \nabla^2_{\bm{\zeta}} h(\bm{\zeta}) &= \left[ 
    \begin{array}{cc}
        \dfrac{\exp \zeta_1}{(\exp \zeta_1+1)^2} & 0 \\
        & \\
        0 & \dfrac{\exp \zeta_2}{(\exp \zeta_2+1)^2}
    \end{array}
    \right].
\end{align*}
Thus, $\|\nabla^2_{\bm{\zeta}} h(\bm{\zeta})\|_2=\mathcal{O}(1)$ and therefore
$\|\nabla^2_\theta L(\theta)\|_2=\mathcal{O}(\mathcal{D}_x^2)$.
Recall that 
\begin{align*}
    f_\text{DAIR}(\bx,\widetilde{\bx},y;\theta)= f_\text{{DA-ERM}} (\bx,\widetilde{\bx},y;\theta) + \lambda \left(\sqrt{\ell(\bx,y;\theta)} - \sqrt{\ell(\widetilde{\bx},y;\theta)}\right)^2    =2L(\theta)+\lambda \mathcal{R}_{\text{sq}}(\theta).
\end{align*}
Thus,  using the computed bounds $\|\nabla^2_\theta \mathcal{R}_{\text{sq}}(\theta)\|_2 = \mathcal{O}(\mathcal{D}_x^2\sqrt{\mathcal{D}_x\mathcal{D}_\theta})$ and  $\|\nabla^2_\theta L(\theta)\|_2=\mathcal{O}(\mathcal{D}_x^2)$, we have
\begin{align*}
    \|\nabla_\theta^2 f_\text{DAIR}(\bx,\widetilde{\bx},y;\theta)\|_2 &= \mathcal{O}((1+\lambda\sqrt{\mathcal{D}_x\mathcal{D}_\theta})\mathcal{D}_x^2).
\end{align*}
Now that we have shown that the Lipschitz constant of the gradient is bounded, by classic gradient descent results~\citep[Theorem~1.2.4]{nesterov2003introductory}, we know that after $T$ iterations of gradient descent with stepsize~$\mathcal{O}(\frac{1}{(1+\lambda\sqrt{\mathcal{D}_x\mathcal{D}_\theta})\mathcal{D}_x^2})$, we have 
\begin{equation*}
    \|\nabla_\theta f_{\text{DAIR},\mathcal{R}, \lambda}(\cdot)\|_2=\mathcal{O}\left(\frac{(1+\lambda\sqrt{\mathcal{D}_x\mathcal{D}_\theta})\mathcal{D}_x^2}{\sqrt{T}}\right),
\end{equation*}
 which completes the proof.

\end{proof}

\newpage
\section{Model architecture and training parameters for MNIST experiments}
\label{app:mnist-setup}
We use a Convolutional Neural Network (CNN) with three convolutional layers followed by two fully connected layers. The last layer output size for Colored MNIST experiments is set to 1, and 10 for the Rotated MNIST experiments. For training we follow a two stage schedule with a learning rate of 0.005 for the first 20 epochs and a learning rate of 0.0005 for the next 20. We choose a batch size of 64 for all experiments. 
The architectural details and training parameters can be found in \Cref{tab:net-arch} and \Cref{tab:training-parameters}.
\begin{table}[h]
\centering

\begin{tabular}{@{}ll@{}}
\toprule
Layer Type           & Shape               \\ \midrule
Convolution $+$ ReLU & $4 \times 4 \times 6$     \\ 
Max Pooling          & $2 \times 2$                \\ 
Convolution $+$ ReLU & $4 \times 4 \times 16$ \\ 
Max Pooling          & $2 \times 2$             \\
Convolution $+$ ReLU & $4 \times 4 \times 96$ \\ 
Fully Connected $+$ ReLU & $64$ \\ 
Fully Connected & $C$ \\ \bottomrule
\end{tabular}
\vspace{0.07in}
\caption{\small Model Architecture, $C=1$ for Colored MNIST and $C=10$ for Rotated MNIST.}
\label{tab:net-arch}
\end{table}

\begin{table}[H]
\centering

\begin{tabular}{@{}lll@{}}
\toprule
Parameter     & \multicolumn{2}{c}{Value} \\ \midrule
Learning Rate & 0.005       & 0.0005      \\
Epochs        & First 20          & Second 20          \\
Batch-size    & 64          &             \\ \bottomrule
\end{tabular}
\vspace{0.07in}
\caption{\small Training parameter of MNIST experiments.}
\label{tab:training-parameters}
\end{table}

We apply the proposed loss function~(\ref{eq:DAIR-loss}) on the following two datasets: Colored MNIST and Rotated MNIST. We compare the performance of DAIR with plain data augmentation, and invariant risk minimization (IRM) as a strong baseline. \edit{One crucial difference between our work and IRM is is the motivation. IRM is designed to take two examples from two different environments and learn representations that are invariant to the environment, e.g., in cases where we are aggregating multiple datasets. On the other hand, we are interested in promoting invariance when we have a single dataset. As such, we artificially generate the second environment in IRM using data augmentation.}
For a given example $z$, we design an augmenter $A(\cdot)$ and use it to generate additional samples that adhere to the invariance we have in mind.
Hence, IRM will be applied in the same way that examples from different environments are augmenting pairs.

Our Colored MNIST is an extension of the original Colored MNIST~\cite{CMNIST}. The label is a noisy function of both digit and color. The digit has a correlation of 0.75 with the label and a certain correlation with the label depending on the color scheme. Besides the two colors in the original dateset, we introduce fully random colored scheme to the dateset, which is the best augmenter one can think of. The three color schemes are detailed in Table~\ref{tab:colored-mnist-scheme}. 

Our Rotated MNIST is a variant of the original Rotated MNIST~\citep{RMNIST}. The original dataset contains images of digits rotated $d$ degrees, where $d\in \mathcal{D} \triangleq \{0, 15, 30, 45, 60, 75\}$. Similarly, we introduce the random degree scheme here to serve as the best possible augmenter. To further exploit the potential of the proposed algorithm, we make this dataset more difficult by introducing more challenging degree scheme; The rotation schemes are summarized in \Cref{tab:rotated-mnist-scheme}.

Note all the augmented images are generated on the fly. Examples of images from some transformation schemes are shown in \Cref{fig:C2,fig:C3,fig:C4,fig:R4,fig:R5,fig:R6}.

\begin{table}[h]
\centering
\begin{tabular}{@{}clc@{}}
\toprule
Scheme             & \multicolumn{1}{c}{$z$}  & $\mbox{Color}\;|\;y=0$ \\ \midrule
\multirow{2}{*}{C$1$} & with $p=0.8$, $z=y$      & Red         \\
                   & with $p=0.2$, $z=1-y$      & Green       \\
\multirow{2}{*}{C$2$} & with $p=0.9$, $z=y$      & Red         \\
                   & with $p=0.1$, $z=1-y$      & Green       \\
\multirow{2}{*}{C$3$} & with $p=0.1$, $z=y$      & Red         \\
                   & with $p=0.9$, $z=1-y$      & Green       \\
C$4$                  & \multicolumn{1}{c}{$z=2$} & Random      \\ \bottomrule
\end{tabular}
\vspace{0.07in}
\caption{\small Color schemes in Colored MNIST. Random color means that the value of each channel of the image is uniformly random chosen from 0 to 255.}
\label{tab:colored-mnist-scheme}
\end{table}

\begin{table}[h]
\centering
\begin{tabular}{cc}
\toprule
Scheme             & Rotation   \\ \midrule
R$1$ & $0\degree$ \\
R$2$ & $90\degree$ \\
R$3$ & $0\degree, 180\degree$  \\
R$4$ & $90\degree, 270\degree$  \\
R$5$ & $[0\degree, 360\degree]$  \\
R$6$ & $[22.5\degree, 67.5\degree], [202.5\degree, 247.5\degree]$
\\

\bottomrule
\end{tabular}
\caption{\small Rotation schemes in Rotated MNIST. $[a,b]$ means that degrees are unformly random chosen between $a$ and $b$.}
\label{tab:rotated-mnist-scheme}
\end{table}

\newpage

\begin{table}[h]
\edit{
\centering
\begin{tabular}{@{}cccccc@{}}
\toprule
Setup Name & Train & Aug & Test & $\lambda$\\ \midrule
Adv. Aug.  & C$1$     & C$2$   & C$3$ & $1000$   \\
Rnd. Aug.  & C$1$     & C$4$   & C$3$ & $100$   \\ \bottomrule
\end{tabular}
\caption{\small Training procedure of Colored MNIST.}
\label{tab:colored-mnist-train}
}
\end{table}

\begin{table}[H]
\edit{
\centering
\begin{tabular}{@{}cccccc@{}}
\toprule
Setup & Train & Aug    & Test & $\lambda$        \\ \midrule
Strong Aug.  & R$1$     & R$5$ & R$2$  & $1$       \\
Weak Aug.  & R$4$     & R$6$ & R$3$  & $10$       \\\bottomrule
\end{tabular}
\caption{\small Training procedure of Rotated MNIST}
\label{tab:rotated-mnist-train}
}
\end{table}

\begin{figure}[H]
\begin{minipage}{.16\linewidth}
\centering
\includegraphics[width=0.8\linewidth]{../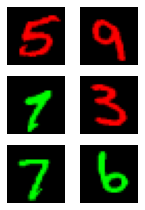}
\caption{\small C2}
\label{fig:C2}
\end{minipage}
\begin{minipage}{.16\linewidth}
\centering
\includegraphics[width=0.8\linewidth]{../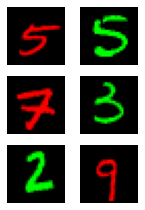}
\caption{\small C3}
\label{fig:C3}
\end{minipage}
\begin{minipage}{.16\linewidth}
\centering
\includegraphics[width=0.8\linewidth]{../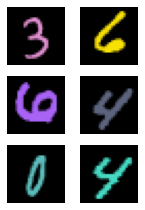}
\caption{\small C4}
\label{fig:C4}
\end{minipage}
\begin{minipage}{.16\linewidth}
\centering
\includegraphics[width=0.8\linewidth]{../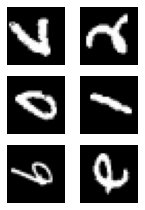}
\caption{\small R4}
\label{fig:R4}
\end{minipage}
\begin{minipage}{.16\linewidth}
\centering
\includegraphics[width=0.8\linewidth]{../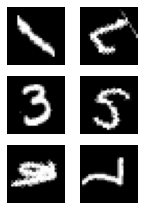}
\caption{\small R5}
\label{fig:R5}
\end{minipage}
\begin{minipage}{.16\linewidth}
\centering
\includegraphics[width=0.8\linewidth]{../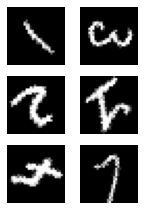}
\caption{\small R6}
\label{fig:R6}
\end{minipage}
\end{figure}

\textbf{Setup:}
We train a model consisted of three convolutional layers and two fully connected layers with 20,000 examples. For each dataset we are defining several different schemes on how the dataset could be modified: Table~\ref{tab:colored-mnist-scheme} (Colored MNIST) and Table~\ref{tab:rotated-mnist-scheme} (Rotated MNIST).
Then, we define several \mbox{\it setups}. Each setup is consisted of one original dataset, one augmentation dataset, and one test dataset, each of which is selected among the defined schemes. These setups are provided in Table~\ref{tab:colored-mnist-train} (Colored MNIST) and Table~\ref{tab:rotated-mnist-train} (Rotated MNIST).
For each setup, we train the model with the following four algorithms and compare their performances: ERM, DA-ERM, DAIR and Invariant Risk Minimization (IRM). Each experiment is repeated for 10 times; the mean and the standard derivation are reported. The value of $\lambda$ are chosen base on the validation results. Detailed architectures and training parameters can be found in Appendix~\ref{app:mnist-setup}. 

\subsection{Colored MNIST}
\label{app:cm-setup}
We conduct two sets of experiments for this dataset: Adversarial Augmentation Setup (Table~\ref{tab:colored-mnist-train}) follows the exact same color schemes from the original Colored MNIST~\cite{CMNIST}. For Random Augmentation Setup, we train the model with the strongest possible augmenter: uniformly random color. The entire procedure is summarized in Table~\ref{tab:colored-mnist-train}.

\subsection{Rotated MNIST}
\label{app:rm-setup}
We start with the strongest augmenter case. One may notice that there is a chance that the augmented images bear the same rotation degrees as the testing set. To make the task more difficult, we will use R6 as the augmented test to test how the trained model generalize to entirely unseen domain. The training procedure is summarized in Table~\ref{tab:rotated-mnist-train}.

\newpage
\section{Additional results on Colored MNIST \& Rotated MNIST}
\label{app:add-res}

\subsection{Colored MNIST}
\label{app:cm-add}
We show additional results on Colored MNIST and Rotated MNIST in~\Cref{Tab:CMNIST1-full-app,Tab:CMNIST2-full-app}. Note that each algorithm has been tuned for best performance. As mentioned in \Cref{sec:CMNIST}, DAIR outperforms DA-ERM, ERM and other baseline models on classification accuracy. For accuracy consistency, we use the training scheme as the original scheme and the testing scheme as the augmentation scheme. We further compare DAIR with IRM~\citep{CMNIST}, DRO~\citep{DRO}, and REx~\citep{pmlr-v139-krueger21a}. In doing so, we feed all original examples as one environment and all augmented examples as a second environment to these baselines. While we can see that DAIR outperforms all baselines, we caution that the comparison may not be fair in that DAIR exploits pairing information between original and augmented samples, which is not used by the other baselines.

\label{app:additional-mnist}

\begin{table}[H]
\small
\begin{center}
      \centering
\begin{tabular}{@{}ccc@{}}
\toprule
Algorithm & Accuracy & CM \\ \midrule
ERM & $32.70 \pm 0.45$ & $77.76 \pm 1.01$\\
DA-ERM & $40.91 \pm 0.45$ & $84.60 \pm 0.60$ \\
DAIR & $72.58\pm 0.11$ & $99.39 \pm 0.11$\\
IRM~\citep{CMNIST} & 66.90 & --\\ 
DRO~\citep{DRO} & 37.40 & --\\ 
REx~\citep{pmlr-v139-krueger21a} & 68.70 & --\\ \bottomrule
\end{tabular}
\caption{\small Accuracy and Accuracy Consistency Metric (CM) on Colored MNIST with Adversarial Augmentation.}
\label{Tab:CMNIST1-full-app}
\end{center}
\end{table}

\begin{table}[H]
\small
\begin{center}
\begin{tabular}{@{}ccc@{}}
\toprule
Algorithm & Accuracy & CM\\ \midrule
ERM & $32.70 \pm 0.45$ & $63.50 \pm 1.92$\\
DA-ERM & $29.61 \pm 0.80$ & $88.15 \pm 0.18$\\
DAIR & $73.10 \pm 0.12$ & $99.88 \pm 0.01$ \\ \bottomrule
\end{tabular}
        \caption{\small Accuracy and Accuracy Consistency Metric (CM) on Colored MNIST with Random Augmentation.}
        \label{Tab:CMNIST2-full-app}
\end{center}
\end{table}

\subsection{Rotated MNIST}
\label{app:rm-add}
We report the accuracy consistency on Rotated MNIST (weak augmentation) in \Cref{tab:CM-rmnist}. The original training scheme here is Scheme R4 (\Cref{tab:rotated-mnist-scheme}), i.e., $90^\circ$ and $270^\circ$ rotated images, and the augmentation scheme for training is R6 (weak rotation).
At test time, we test with R1 (no rotation) and we also use the augmentation scheme of $180^\circ$ rotation to test the accuracy consistency metric. 
Note that neither the un-rotated or $180^\circ$ rotated images have been observed at training time. Hence, the setup is difficult for ERM which struggles to generalize. As can be seen, since the digit $0$ is ``almost'' circularly symmetric, ERM actually does a decent job at classifying $0$, however it significantly struggles with all other digits.
We see that DAIR outperforms ERM and DA-ERM by a large margin. We observe that digits $6$ and $9$ are challenging to get right (as one would expect for them to be difficult to tell apart). While we see $2-3\%$ drop on the consistency for digits $6$ and $9$ (when rotating them by $180^\circ$), the drop is smaller than expected perhaps due to the fact that the neural network learns to classify these digits based on features that are harder to get for humans.

\begin{table}[H]
\centering
\resizebox{0.95\textwidth}{!}{%
\begin{tabular}{@{}lcccccc@{}}
\toprule
\multirow{2}{*}{Digit} & \multicolumn{2}{c}{ERM}                                    & \multicolumn{2}{c}{DA-ERM}                                 & \multicolumn{2}{c}{DAIR}              \\
                       & Acc.              & CM                                     & Acc.              & CM                                     & Acc.              & CM                \\ \midrule
\multicolumn{1}{l|}{0} & $86.19 \pm 01.48$ & \multicolumn{1}{c|}{$94.95 \pm 01.53$} & $95.61 \pm 00.66$ & \multicolumn{1}{c|}{$98.43 \pm 00.21$} & $98.44 \pm 00.07$ & $99.31 \pm 00.15$ \\
\multicolumn{1}{l|}{1} & $00.15 \pm 00.08$ & \multicolumn{1}{c|}{$11.11 \pm 11.11$} & $82.79 \pm 03.38$ & \multicolumn{1}{c|}{$98.54 \pm 00.43$} & $96.09 \pm 00.71$ & $97.59 \pm 01.28$ \\
\multicolumn{1}{l|}{2} & $29.84 \pm 00.51$ & \multicolumn{1}{c|}{$57.91 \pm 02.76$} & $76.68 \pm 03.54$ & \multicolumn{1}{c|}{$82.70 \pm 03.27$} & $86.21 \pm 00.82$ & $93.21 \pm 01.32$ \\
\multicolumn{1}{l|}{3} & $00.63 \pm 00.53$ & \multicolumn{1}{c|}{$76.47 \pm 23.53$} & $78.84 \pm 02.60$ & \multicolumn{1}{c|}{$89.24 \pm 01.26$} & $86.60 \pm 02.24$ & $94.26 \pm 00.36$ \\
\multicolumn{1}{l|}{4} & $01.97 \pm 00.90$ & \multicolumn{1}{c|}{$23.38 \pm 13.49$} & $51.09 \pm 03.30$ & \multicolumn{1}{c|}{$78.15 \pm 02.73$} & $79.67 \pm 01.26$ & $92.42 \pm 00.41$ \\
\multicolumn{1}{l|}{5} & $05.53 \pm 00.32$ & \multicolumn{1}{c|}{$39.91 \pm 04.59$} & $65.02 \pm 02.42$ & \multicolumn{1}{c|}{$84.68 \pm 03.71$} & $83.26 \pm 02.51$ & $95.11 \pm 01.46$ \\
\multicolumn{1}{l|}{6} & $00.66 \pm 00.37$ & \multicolumn{1}{c|}{$51.79 \pm 25.13$} & $67.43 \pm 03.82$ & \multicolumn{1}{c|}{$83.41 \pm 05.74$} & $84.79 \pm 01.17$ & $92.78 \pm 01.71$ \\
\multicolumn{1}{l|}{7} & $16.67 \pm 02.75$ & \multicolumn{1}{c|}{$18.28 \pm 06.65$} & $56.29 \pm 07.26$ & \multicolumn{1}{c|}{$81.67 \pm 06.90$} & $78.11 \pm 02.10$ & $95.03 \pm 01.21$ \\
\multicolumn{1}{l|}{8} & $10.92 \pm 05.47$ & \multicolumn{1}{c|}{$22.54 \pm 05.46$} & $74.50 \pm 01.10$ & \multicolumn{1}{c|}{$89.12 \pm 01.69$} & $90.55 \pm 01.13$ & $95.35 \pm 00.47$ \\
\multicolumn{1}{l|}{9} & $17.08 \pm 07.70$ & \multicolumn{1}{c|}{$11.56 \pm 00.62$} & $69.54 \pm 04.18$ & \multicolumn{1}{c|}{$86.78 \pm 01.08$} & $80.84 \pm 01.18$ & $93.21 \pm 01.39$ \\ \midrule
All                    & $16.85 \pm 1.08$  & $64.14 \pm 2.69$                       & $71.98 \pm 1.70$  & $88.28 \pm 0.27$                       & $86.57 \pm 0.55$  & $94.98 \pm 0.29$  \\ \bottomrule
\end{tabular}
}
\caption{\small Rotated MNIST with $90^\circ$ or $270^\circ$ rotated original images and Weak Augmentation during training. The test scheme is un-rotated original images. Consistency metric (CM) is computed between un-roated images and ones with $180^\circ$ rotation. It can be seen that CM is relatively small for $6$ and $9$ but the drop is smaller than expected suggesting that CNNs learn from features different from how humans perceive the digits.}
\label{tab:CM-rmnist}
\end{table}

\clearpage
\section{Practical considerations when DAIR is used in training}
\label{sec:characteristic-bump}
\vspace{-.1in}

\subsection{Explanations for a practical sweet spot for the performance of DAIR-SQ as a function of $\lambda$}
\label{app:sweet-lambda}
In this section, we investigate the reason why we see a sweet spot for the performance of  DAIR-SQ as a function of $\lambda$. As shown in \Cref{fig:r_minist_noisy}, we see a sweet spot for $\lambda$, where the performance takes its maximum and starts to decrease for larger values of $\lambda.$ 
There are a few explanations for this performance degradation.
\begin{enumerate}
    \item It is  observed that a large $\lambda$ requires a relatively longer time for convergence. To show empirically this is true, we added another example in \Cref{app:toy-logistic}.  Theoretically, this is in line with the classical results in the optimization literature where larger Lipschitz constants (resulting from adding a regularizer) slows down the convergence rate. Thus, as we are training all models for a certain number of epochs, we will end up with underfitting.
    \item A larger $\lambda$ is more likely to guide the optimization trajectory towards a spurious poor local minimum with poor generalization performance, when the optimization trajectory is non-convex. We have experimentally verified this in \Cref{sec:justification-dair}~(\Cref{fig:loss-iter}) as the reason for the poor performance of DAIR-L1 in \Cref{fig:r_minist_noisy}.
    \item With a finite number of samples our regularizer does not necessarily lead to the best possible performance in the infinite sample setting (with weak domain shift). Hence, we might expect to observe the classical approximation-estimation tradeoff. This is especially true in real-world scenarios where one might expect that the difficulty of the example may not necessarily be preserved through data augmentation, and hence forcing the loss to be equal on both samples might be detrimental to the overall performance, which may lead to a practical sweet spot for $\lambda.$
\end{enumerate}
We dig into the experiment in \Cref{fig:r_minist_noisy} specifically and try to understand which case is the responsible for the sweet spot in \Cref{fig:r_minist_noisy}. We extend the number of training epochs from 40 to 160, and report the accuracy for $\lambda \in \{ 1.43, 8.85, 16.23, 100\}$.\footnote{Note these values comes from $\log_{10}$ sweeping of $\lambda$.} \Cref{tab:dair-extended-epoch} suggests that, when we increase the number of training epochs, the sweet spot of $\lambda$ moves from $8.85$ to $16.23$ and in fact we can achieve an even better performing model with accuracy $89.22$ as compared to the previously reported $85.89$, while the performance does not change much for the smaller values of $\lambda$. We also observe a big performance boost for larger values of $\lambda$. This suggests that in this experiment the sweet spot for $\lambda$ is caused by capping the training epochs to a finite value. Having said that, we believe that we are practically interested in using DAIR with marginal computational overhead over ERM and hence we would expect to observe such sweet spot in performance in practice as $\lambda \to \infty$. 

\begin{table}[H]
\small
\centering
\begin{tabular}{@{}ccc@{}}
\toprule
$\lambda$ & Acc at Epoch 40 & Acc at Epoch 160\\ \midrule
$1.43$ & $79.09$ & $80.19$ \\
$8.85$ & $\mathbf{85.89}$ & $86.60$ \\
$16.23$ & $82.66$  & $\mathbf{89.22}$ \\
$100$ & $46.95$ & $69.37$\\ \bottomrule
\end{tabular}
\caption{\small Testing accuracy of Rotated MNIST, Weak Augmentaion. We see the accuracy increases as we extend the number of training epochs.}
\label{tab:dair-extended-epoch}
\end{table}

\vspace{-.15in}
\subsection{Additional evidence on growing cost of training with the regularization strength}
\label{app:toy-logistic}

We also provide further evidence for the growing cost of training with $\lambda$ on a toy problem where we can reliably measure the gradient norm and ensure convergence. 
 We study the following simple binary classification problem with logistic regression, which mirrors the  MNIST experiments: at the training time the input is $\mathbf{x}_{\text{train}} = (x ,s=2y-1+t_1)$ and the label $y$, i.e., $z_{\text{train}} =(\mathbf{x}_{\text{train}}, y)$. Here, $x \sim \mathcal{N}(0,\sigma^2_x)$, and $P(y=1|x)  = \frac{1}{1+e^{-x}},$ where $t_1$ is independent of $x$ and $t_1 \sim \mathcal{N}(0,\sigma_1^2)$. In this example, we intentionally provide feature $s$ which is highly correlated with the label during training. Again, clearly, $w^\star = (1,0)^{\top}$, but $w_\text{ERM}^\star$ will converge to $(0,1)^\top$ due to the overfitting to the spurious feature. We introduce an augmenter which generates the augmented example such as $\mathbf{x}_{\text{aug}} = (x ,s=2y-1+t_1+t_2)$ where $t_2 \sim \mathcal{N}(0,\sigma_2^2)$. We use this data augmenter for DAIR training and test on $\mathbf{x}_{\text{test}} = (x ,s=1-2y)$. We summarize the steps need for convergences and the testing accuracy in \Cref{tab:logistic} as well. We can find that the required number of iteration to convergence increases as $\lambda$ increase. 
 
 For this tiny toy example, there is a factor of 10x increase in the required number of iterations when $\lambda$ is chosen to be 10,000 as opposed to 0.5. Note that this is using ADAM and the gap is significantly larger if we use vanilla gradient descent; as we were not able to even converge in $10^8$ steps. This is provided as further evidence for the practical sweet spot for DAIR as $\lambda \to \infty.$
 \begin{table}[h]
\small
\centering
\begin{tabular}{@{}cc@{}}
\toprule
$\lambda$ & Iterations to Converge \\ \midrule
0.5 & $81.35 \pm 6.07$  \\
1 & $91.05 \pm 2.53$ \\
2 & $89.10 \pm 2.41$  \\
5 & $101.65 \pm 2.87$  \\
10 & $107.70 \pm 5.77$   \\
100 & $151.75 \pm 4.28$  \\
1,000 & $195.85\pm 4.54$  \\
10,000 & $802.60 \pm 7.58$  \\ \bottomrule
\end{tabular}%
\caption{\small Iteration needed for the logistic model to converge with different $\lambda$. The model is converged when the $\mathcal{L}_2$ norm of the gradient is less than $10^{-7}$.}
\label{tab:logistic}
\end{table}

\subsection{Compare DAIR-SQ and DAIR-L1 on avoidance of spurious local min}
\label{app:practial-consideration-exp}
We empirically verify this hypothesis on Colored MNIST with Adversarial Augmentation. \Cref{fig:loss-iter} depicts the classification loss and regularization of the \Tianjian{first 10 and last 140 iterations}. One observes that at the beginning of training, regularization term of DAIR-SQ impacts the training dynamics less while DAIR-L1 starts optimizing the regularizer right away, which dominates the entire training procedure and therefore leads the model to a poor local minimum. The left panel of \Cref{fig:loss-iter} confirms that the classification loss of DAIR-L1 remains large and unchanged (that of a random classifier).

\begin{figure}[h]

\centering
\includegraphics[height=1.5in]{../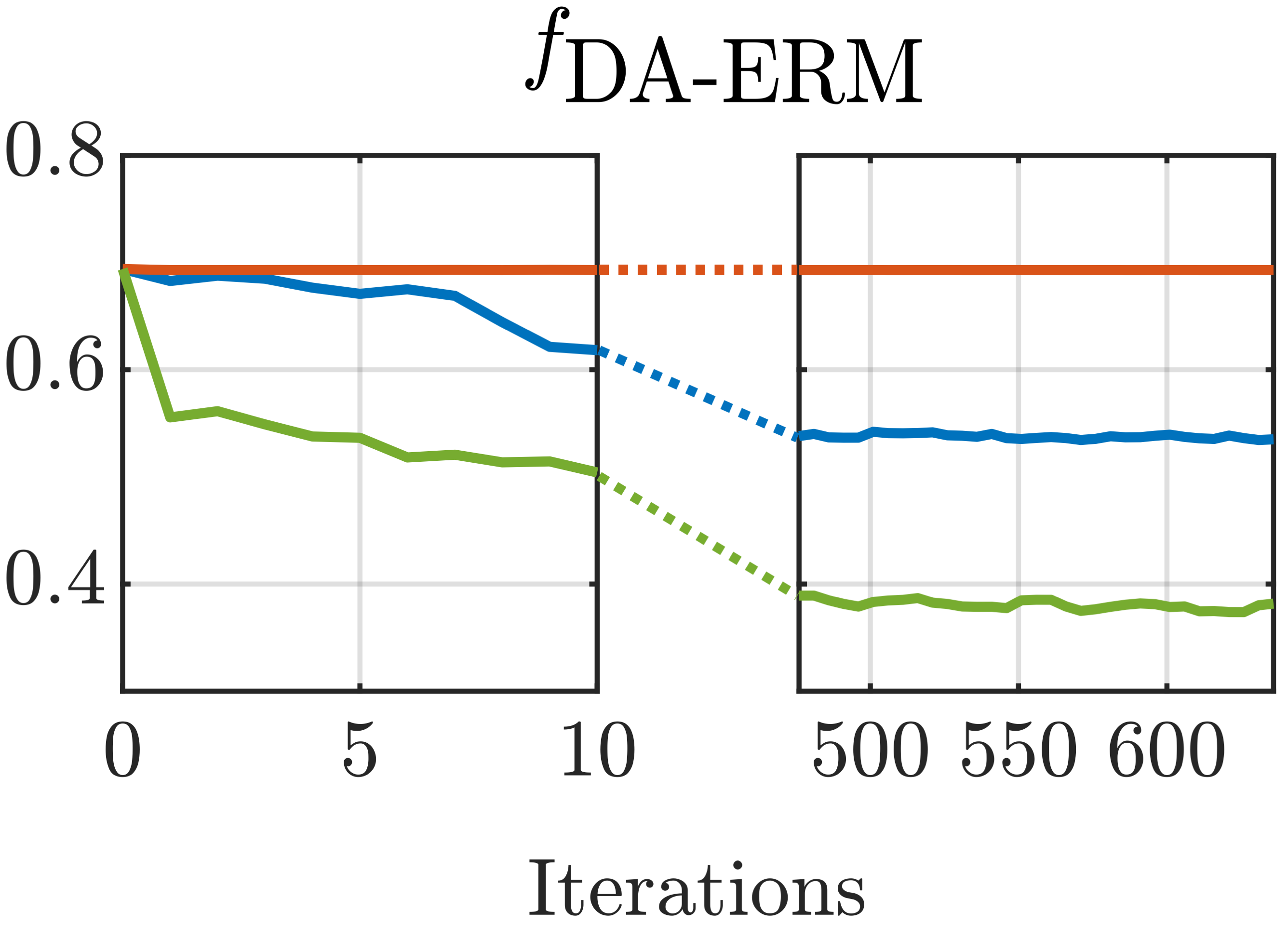}
\hspace{0.3in}
\includegraphics[height=1.5in]{../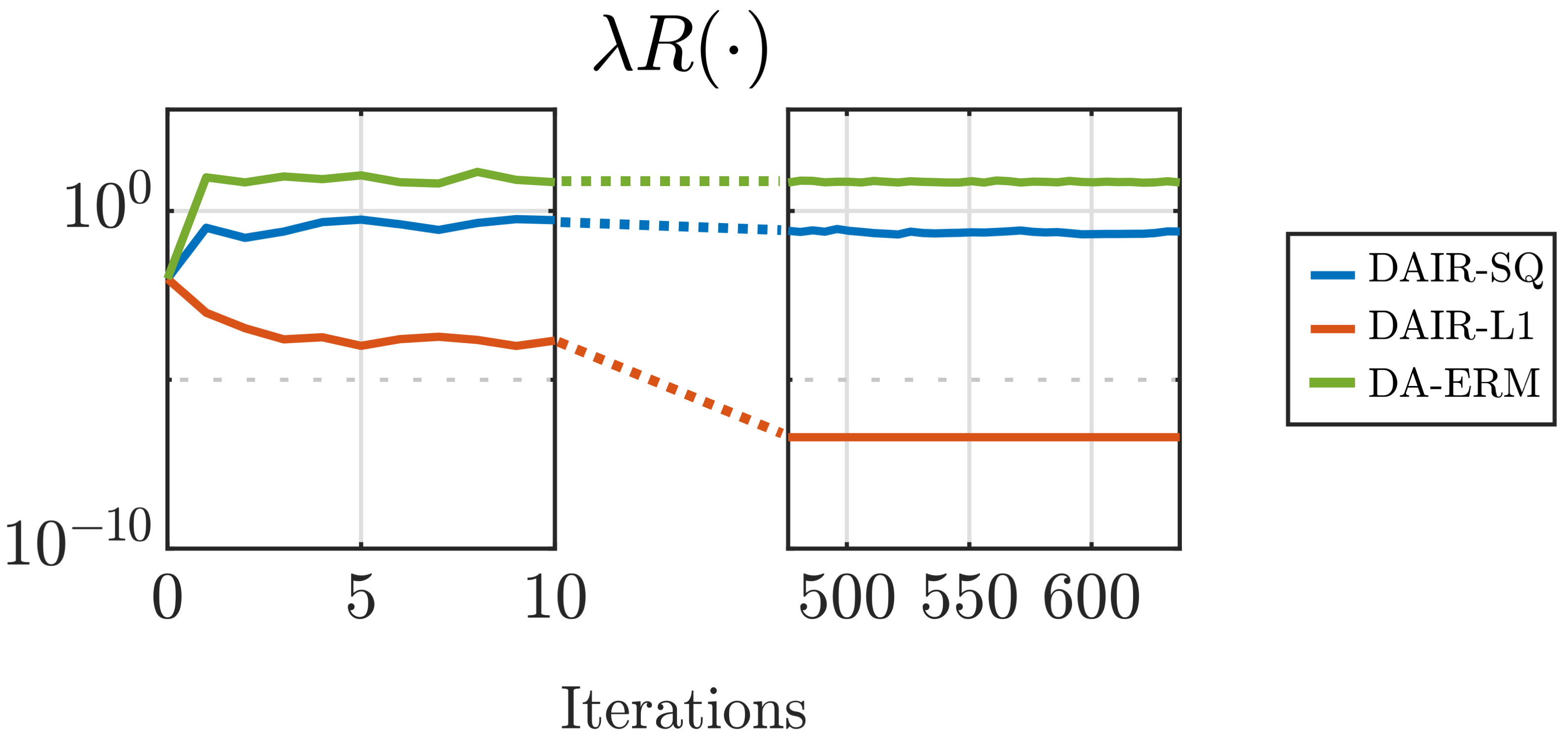}

\vspace{-.1in}
\caption{\small Training DA-ERM loss and (\ref{eq:reg_sq}) for first 10 and last 140 iterations on Colored MNIST with Adv Aug for DAIR ($\lambda=100$). The regularizer loss on DA-ERM grows large as it is uncontrolled. DAIR-L1 is optimizing an L1 regularizer, but for unified illustration we evaluate it using~(\ref{eq:reg_sq}).}
\label{fig:loss-iter}
\vspace{-.05in}
\end{figure}

\Tianjian{
This same property of DAIR-SQ also weakens the regularizer on training samples with high losses at the later stages of training. These samples are likely noisy, which makes DAIR-SQ more robust to noisy samples, as we already observed in 
\Cref{sec:RMNIST}.}

For the experiments in \Cref{sec:experiments,sec:invariate-exp}, we tune $\lambda$ in DAIR regularizer on top of the ERM baselines. In other words, in all experiments we use the same set of hyper-parameters including step-size, batch-size, etc., from  literature and only tune $\lambda$ on the validation set.

\newpage
\section{The impact of partial augmentation}
\label{app:partial-aug}
We explore the impact of partial augmentation, where we only augment a certain fraction of the training samples.
The experiment revisits noiseless Rotated MNIST with weak rotation data augmentation and Colored MNIST with Adversarial augmentation. This experiment emulates situations where an augmentation function is only applicable to certain examples or where augmentation is expensive and we would like to decrease the augmentation cost.

In \Cref{fig:partial_aug}, we report the experiment results for DA-ERM and DAIR-SQ by applying augmentation only \{10\%, 20\%, 30\%, 50\%, 100\%\} of the training samples, averaged on three runs. In Rotated MNIST experiment, as can be seen, DAIR-SQ with augmentation on only 20-30\% of the samples performs similar to full augmentation. 
On the other hand, DA-ERM is more sensitive to partial augmentation and is subject to a steeper performance drop. This could be viewed as further evidence that DAIR-SQ could reach its best performance using weak augmenter functions. It is also noteworthy that in this example, DAIR-SQ with only 10\% partial augmentation still outperforms DA-ERM with 100\% augmentation. One can draw similar conclustion in the Colored MNIST experiment as only 10\% augmentation gives comparable performance to full augmentation.
\begin{figure}[H]
    \centering
    \includegraphics[]{../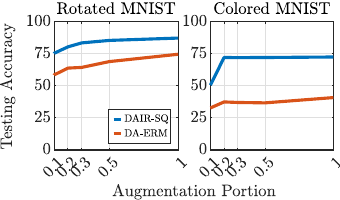}
    \caption{\small Test accuracy vs fraction of augmented samples on Rotated MNIST.
    } 
    \label{fig:partial_aug}
\end{figure}

\newpage
\section{Details on neural task-oriented dialog modeling}
\label{app:mw-examples}

We provide details on the benchmark that we used in this experiment. \citet{2105.14150} proposed a new test set for MultiWOZ 2.2, called MultiWOZ 2.2 with SGD entities, where named entities are replaced with those from Schema Guided Dialog dataset~\citep{rastogi2020scalable} and showed that SimpleTOD~\citep{hosseiniasl2020simple} endures more than 8\% performance drop on the new test set. Examples from the dataset are shown in \Cref{tab:aug-mw}.  To address this problem, we define a new data augmentation scheme for DAIR and DA-ERM by replacing the named entities from the MultiWOZ 2.2 training set with randomly scrambled versions of the named entities. For example,  ``warkworth house'' could be turned into ``easrtokow hhrwu'' (see \Cref{tab:aug-mw}). In all of our experiments, we utilize the SimpleTOD model~\citep{hosseiniasl2020simple} and we apply DAIR to enforce invariance between the named entities in the training examples and the scrambled entities from their corresponding augmented samples. The model is trained with ParlAI~\citep{miller2017parlai} fine-tuned with the pre-trained BART~\citep{lewis2019bart}. Training hyper-parameters can be found in \Cref{tab:para-tod}. The optimal $\lambda$ was tuned by grid search in $\{0.01, 0.1, 0.3, 0.5, 0.7, 0.9, 0.99, 1.0\}$.

\begin{table}[h]
\centering
\begin{tabular}{@{}cc@{}}
\toprule
Parameter & Value \\ \midrule
$\lambda$ & 0.5 \\
Epochs & 4 \\
Batchsize & 6 \\
Optimizer & AdamW \\
Learning rate & $10^{-5}$ \\ \bottomrule
\end{tabular}%
\caption{\small Hyper-parameters used in training SimpleTOD.}
\label{tab:para-tod}
\end{table}

\begin{table}[h]
\begin{center}
\begin{minipage}{.3\linewidth}
\centering
\resizebox{\linewidth}{!}{%
\begin{tabular}{cp{0.9\linewidth}}
\toprule
User: & can you help me book a reservation at the {\color{brickred}{warkworth house}} hotel? \\ \midrule
Agent: & yes i could! how many people are staying, and what days would fyou like to stay? \\ \midrule
User: & it's just for me, and i'll be staying for three nights starting from tuesday. \\ \midrule
\multirow{4}{*}{DS:} & \textit{\textbf{hotel-bookday:} tuesday} \\
 &\textit{\textbf{hotel-bookpeople:} 1}\\
 &\textit{\textbf{hotel-bookstay:} 3}\\
 &\textit{\textbf{hotel-name:} \color{brickred}{warkworth house}}\\
\bottomrule
\end{tabular}
}
\end{minipage}
\hspace{0.03in}
\begin{minipage}{.3\linewidth}
\centering
\resizebox{\linewidth}{!}{%
\begin{tabular}{cp{0.9\linewidth}}
\toprule
User: & can you help me book a reservation at the {\color{brickred}{easrtokow hhrwu}} hotel? \\ \midrule
Agent: & yes i could! how many people are staying, and what days would fyou like to stay? \\ \midrule
User: & it's just for me, and i'll be staying for three nights starting from tuesday. \\ \midrule
\multirow{4}{*}{DS:} & \textit{\textbf{hotel-bookday:} tuesday} \\
 &\textit{\textbf{hotel-bookpeople:} 1}\\
 &\textit{\textbf{hotel-bookstay:} 3}\\
 &\textit{\textbf{hotel-name:} \color{brickred}{easrtokow hhrwu}}\\
\bottomrule
\end{tabular}
}
\end{minipage}
\hspace{0.03in}
\begin{minipage}{.3\linewidth}
\centering
\resizebox{\linewidth}{!}{%
\begin{tabular}{cp{0.9\linewidth}}
\toprule
User: & can you help me book a reservation at the {\color{brickred}{clarion inn \& suites atlanta downtown}} hotel? \\ \midrule
Agent: & yes i could! how many people are staying, and what days would fyou like to stay? \\ \midrule
User: & it's just for me, and i'll be staying for three nights starting from tuesday. \\ \midrule
\multirow{4}{*}{DS:} & \textit{\textbf{hotel-bookday:} tuesday} \\
 &\textit{\textbf{hotel-bookpeople:} 1}\\
 &\textit{\textbf{hotel-bookstay:} 3}\\
 &\textit{\textbf{hotel-name:} \color{brickred}{clarion inn \& suites atlanta downtown}}\\
\bottomrule
\end{tabular}
}
\end{minipage}
\caption{\small Left: sample from the original MultiWOZ dataset. Middle: augmented sample generated by scrambling. Right: synthetic sample with name entities from SGD. Comparing left and the middle example, we are generating new named entities (marked in red) by scrambling. Comparing left and the right example, the only difference is the named entity from different dataset, which is marked in red. Note that the SGD named entities are not exposed to the model during training. Only the original named entities and scrambled named entities from MultiWOZ are used during training.}

\label{tab:aug-mw}
\end{center}
\end{table}

\newpage
\section{Setup and additional results for Visual Question Answering}
\label{app:more-vqa}
All the approaches included in this paper use the original VQA v2 train split for training, along with the IV-VQA and CV-VQA train splits for augmentation in the DAIR and DA-ERM\citep{agarwal2020towards} settings. The ERM setup~\citep{Kazemi2017ShowAA}, represents a vanilla SAAA model trained on the VQA v2 train split. For the data augmentation methods, if an image from VQA v2 contains multiple edited versions in IV-VQA/CV-VQA, we randomly select one of them to serve as an augmented sample during training. We modify the official code released by \cite{agarwal2020towards} to suit our formulation. All the methods are trained for 40 epochs with a learning rate of 0.001 and a batch size of 48. The baseline approaches that we compare against are trained and evaluated by us, using the same training setup as DAIR.

\begin{table}[h]
\centering
\resizebox{0.9\linewidth}{!}{
\begin{tabular}{@{}lllllll@{}}
\toprule
~~~$\lambda$ & VQA val (\%) & CM & Predictions flipped (\%) & pos $\rightarrow$ neg (\%) & neg $\rightarrow$ pos (\%) & neg $\rightarrow$ neg (\%) \\ \midrule
\multicolumn{1}{l}{0.72} & \textbf{64.89} & 95.89 & 6.67 & 2.64 & 2.38 & 1.65 \\
\multicolumn{1}{l}{1} & 64.75 & 96.06 & 6.33 & 2.54 & 2.22 & 1.57 \\
\multicolumn{1}{l}{1.68} & 63.90 & 96.19 & 5.78 & 2.20 & 1.95 & 1.64 \\
\multicolumn{1}{l}{2.68} & 62.51 & 96.63 & 5.23 & 1.88 & 1.86 & 1.49\\
\multicolumn{1}{l}{5.18} & 60.03 & 97.22 & 4.45 & 1.63 & 1.59 & 1.22 \\
\multicolumn{1}{l}{10} & 57.70 & \textbf{97.67} & \textbf{3.91} & \textbf{1.33} & \textbf{1.37} & \textbf{1.21} \\ \bottomrule
\end{tabular}}
\caption{\small Accuracy-Consistency Tradeoff on VQA v2 val and IV-VQA test set  controlled by $\lambda$}
\label{tab:vqa-tradeoff}
\end{table}
\Cref{tab:vqa-tradeoff} indicates a tradeoff between the accuracy on the VQA v2 val set and the consistency metrics. The optimal $\lambda$ value is determined by grid search over a uniformly chosen set of size 8 in log space $[10^{-1}, 10]$ with the corresponding performance on the validation set. As the $\lambda$ value increases, the consistency between the predictions increases, while the accuracy on original examples decreases. For instance, A $\lambda$ value of 10 strongly boosts consistency thus lowering the `Predictions flipped' percentage to only 3.91\% but sacrifices the classification accuracy causing it to drop to 57.7\%.

\newpage
\section{Additional details on robust regression}

\label{app:robust-reg}
For the robust regression task, we determine the optimal $\lambda$ by performing a grid search over a uniformly chosen set of size 5 in log space $[1, 10^4]$ and the best performing $\lambda$ on validation set is used for reporting the results on the test set. We set the learning rate to 0.01 for all these experiments. 
Following the convention from \cite{li2020tilted}, we set the tilting factor 't' to -2 for all experiments that use the TERM objective. 

\newpage
\section{Details on training robust neural networks}
\label{app:NN-details}
\subsection{Setups for the main results in \Cref{sec:robust}}
\label{app:NN-details-p1}
For all algorithms reported in \Cref{tab:robust_nn_results}, we use Pre-Activation ResNet-18 \citep{he2016identity}, with a last-layer output size of 10 as the classification model and their original hyper-parameters. For training the DAIR model, the adversarial examples are generated by $\mathcal{L}_\infty$ based PGD attack with 11 iterations, $\varepsilon$ (attack strength) set to 8/255 and attack step size to 2/255. We train the model for 120 epochs with initial step size $0.0001$ and uses CosineAnnealing scheduler. We evaluate all the models against the standard FGSM attack and PGD attack with 20 iterations of same perturbation sizes. The optimal $\lambda$ by performing a grid search over a uniformly chosen set in log space $[10^{-1}, 10^2]$ with 10 points. 

\subsection{Additional results with new baselines}
\label{app:NN-details-p2}
We also compare DAIR with some recent new baselines such \cite{robust_new}, which utilizes Jensen-Shannon consistency regularization on the features. To be detailed, a pair of images with attacks are fed into the model and the Jensen-Shannon distance between the resulting output logits are computed afterward as the regularizer. The regularizer then is added to existing algorithms such as \citep{madry2018}, \cite{zhang_icml_2019} to boost their performances. The results are summarized in \Cref{tab:robust-tack}. It can be seen that DAIR is also comparable with new baseline. It worth mentioning that the performance of DAIR is better in \Cref{tab:robust-tack} than in \Cref{tab:robust_nn_results}. The reason is that the training and the tuning setups are different. We follow the exact setup of \cite{robust_new} and obtain the results in this subsection.

\begin{table}[h]
\centering
\begin{tabular}{ccc}\\\toprule  
Method & Clean & PGD-20 \\\midrule
ERM \citep{madry2018} &84.57 (83.43) & 45.04 (52.82)\\
MART \citep{mart} &82.63 (77.00) & 51.12 (54.83) \\
TRADES \citep{zhang_icml_2019} &82.87 (82.13) & 50.95 (53.98) \\
JS Consistency \citep{robust_new} &86.45 (85.25) &  56.51 (57.53)\\  
DAIR-SQ &86.16 (85.24) &  56.68 (57.22)\\  \bottomrule
\end{tabular}
\caption{DAIR vs \cite{robust_new}. Accuracies in the parenthesis are from models tuned for PGD-20 while accuracies to the left of the parenthesis are from models tuned for clean images.}
\label{tab:robust-tack}
\end{table} 

\newpage
\section{Details on training ImageNet-9}
\label{app:img9-details}
We use ResNet-50 provided by Torchvision but replace the last layer with 9 outputs. We train the model for 175 epochs with batchsize 128, initial learning rate of 0.1 and decay of 0.1 at 30, 70, 110, 150 epochs.

\end{document}